 \newcommand{\suppx}[2][D]{\ensuremath{\mathit{supp}_{#1}({#2})}}
 \newcommand{\sxd}{\ensuremath{S_{X,D}}}
 \newcommand{\hsxd}{\ensuremath{H(S_{X,D})}}
 \newcommand{\cald}{\ensuremath{\mathfrak{I}(D)}}
 \newcommand{\calsxd}{\ensuremath{\mathfrak{I}(\sxd)}}
 \newcommand{\algesystem}{\ensuremath{(H,X)(D)}}
  \newcommand{\jacobian}{\ensuremath{J_X(D)}}
  \newcommand{\framework}[1][D]{\ensuremath{(H,X,#1)}}
\newcommand*{\Cdot}{{\scalebox{1.25}{$\,\cdot\,$}}}
\newtheorem{claim}{Claim}
\newtheorem{question}{Question}
\newtheorem{problem}{Problem}
\newtheorem{example}{Example}
\newtheorem{remark}{Remark}
\newtheorem{theorem}{Theorem}
\newtheorem{lemma}{Lemma}
\newtheorem{corollary}[theorem]{Corollary}
\theoremstyle{plain}
\newtheorem{definition}[lemma]{Definition}
\newcommand\scalemath[2]{\scalebox{#1}{\mbox{\ensuremath{\displaystyle #2}}}}
\title{An Incidence Geometry approach to Dictionary Learning\thanks{This research was supported in part by the research grant NSF CCF-1117695 and a research gift from SolidWorks. A short version of this paper has appeared on Proceedings of the 26th Canadian Conference on Computational Geometry, 2014.}}
 \author{Meera Sitharam\thanks{Department of Computer \& Information Science \& Engineering, University of Florida}
 \and Mohamad Tarifi\thanks{Google Inc.}
 \and Menghan Wang\thanks{Department of Computer \& Information Science \& Engineering, University of Florida}}
\begin{document}

\maketitle

\begin{abstract}
We study the Dictionary Learning (aka Sparse Coding) problem 
of obtaining a sparse representation of data points, 
by learning \emph{dictionary vectors} upon which the data points
can be written as sparse linear combinations.
We view this problem from a geometry perspective as the spanning set of a subspace arrangement,
and focus on understanding the case when the underlying hypergraph of the subspace arrangement is specified. 
%leading to a new family of dictionary learning algorithms and learning complexity theorems. 
%
%When the data are sufficiently general and the dictionary size is sufficiently large,  
For this Fitted Dictionary Learning problem,
we completely characterize the combinatorics of the associated subspace arrangements (i.e.\ their underlying hypergraphs). 
Specifically, a combinatorial rigidity-type theorem is proven for a type of geometric incidence system.
The theorem characterizes the hypergraphs of subspace arrangements that generically yield 
(a) at least one dictionary
(b) a locally unique dictionary (i.e.\ at most a finite number of isolated dictionaries)
of the specified size.
We are unaware of prior application of combinatorial rigidity techniques 
in the setting of Dictionary Learning, or even in machine
learning.  
%
%By geometrically interpreting an exact formulation of Dictionary Learning, we identify related problems and
%draw formal relationships among them. 
We also provide a systematic classification of problems related to Dictionary Learning 
together with various algorithms, their assumptions  and performance. 
%We provide a systematic classification  of different Dictionary Learning problems with their respective conditions and results.
%list directions for further
%research that this approach opens up.
\end{abstract}

\section{Introduction}

Dictionary Learning (aka Sparse Coding) is the problem of obtaining a sparse representation
of data points, by learning \emph{dictionary vectors} upon which the data points
can be written as sparse linear combinations.
\begin{problem}[Dictionary Learning] \label{prob:dictionary_learning}
A point set $X=[x_1 \ldots x_m]$ in $\mathbb{R}^d$ is said to be \emph{$s$-represented} by a \emph{dictionary} $D = [v_1 \ldots v_n]$ for a given \emph{sparsity} $s < d$,
if there exists $\Theta = [\theta_1 \ldots \theta_m]$ such that $x_i=D\theta_i$,  
with $\Arrowvert\theta_i\Arrowvert_0 \leq s$.
Given an $X$ known to be $s$-represented by an unknown \emph{dictionary} $D$ of size $|D|=n$, 
\emph{Dictionary Learning} is the problem of finding any  dictionary $\acute{D}$ 
satisfying the properties of $D$, i.e.\ $|\acute{D}| \le n$, 
and there  exists $\acute{\theta_i} $ such that $x_i=\acute{D} \acute{\theta_i}$  for all $x_i \in X$.
%
%Let $X=[x_1 \ldots x_m]$ be a given set of points in $\mathbb{R}^d$,  
%known to be \emph{$s$-represented} by an unknown \emph{dictionary} $D = [v_1 \ldots v_n]$ of size $|D|=n$
%for a given \emph{sparsity} $s$,
%%using a subspace arrangement $\sxd$ %with $|\sxd| \le m$
%i.e.\ there exists $\Theta = [\theta_1 \ldots \theta_m]$ such that $x_i=D\theta_i$,  
%with $\Arrowvert\theta_i\Arrowvert_0 \leq s$.
%\emph{Dictionary Learning} is the problem of finding any  dictionary $\acute{D}$ 
%satisfying the properties of $D$, i.e.\ $|\acute{D}| \le n$, 
%and there  exists $\acute{\Theta_i} $ such that $x_i=\acute{D} \acute{\theta_i}$  for all $x_i \in X$.
\end{problem}

The dictionary under consideration is usually \emph{overcomplete},  with $n > d$.
However we are interested in asymptotic performance with respect to all four variables $n,m,d,s$.
Typically, $m \gg n \gg d > s$. 
Both cases when $s$ is large relative to $d$ and when $s$ is small relative to $d$ are interesting.

The Dictionary Learning problem arises in various context(s) such as signal processing and machine learning.

\subsection{Review: Traditional and Statistical Approaches to Dictionary Learning}
% \label{sec:review}

A closely related problem to Dictionary Learning is
the Vector Selection (aka sparse recovery) problem, which finds a representation of input data in a known dictionary $D$. 

\begin{problem}[Vector Selection]
Given a dictionary $D \in \mathbb{R}^{d \times n}$  and an input data point
$x \in \mathbb{R}^d$, the \emph{Vector Selection problem} asks for $\theta \in \mathbb{R}^n$ 
such that $x = D\theta$ with $\Arrowvert \theta \Arrowvert_{0}$ minimized.
\end{problem} 

That is, $\theta$ is a sparsest support vector that represents $x$ as linear combinations of the
columns of $D$.

An optimization version of Dictionary Learning can be written as:
\[
\min_{D \in \mathbb{R}^{d\times n}}{ \max_{x_i} {\min \Arrowvert \theta_i \Arrowvert_0  : x_i = D\theta_i}}.  
\]
In practice, it is often relaxed to the Lagrangian 
$ \min \sum_{i=0}^m (\Arrowvert x_i - D\theta_i \Arrowvert_2  + \lambda \Arrowvert \theta_i \Arrowvert_1$).

Several traditional Dictionary Learning algorithms work by \emph{alternating minimization}, 
i.e.\ iterating the following two steps \cite{sprechmann2010dictionary,ramirez2010classification,olshausen1997sparse}:

1. Starting from an initial estimation of $D$, solving the Vector Selection problem for all data points $X$ to find a corresponding $\Theta$. This can be done using any vector selection algorithm, such as basis pursuit from \cite{chen1998atomic}. 

2. Given $\Theta$, updating the dictionary estimation by solving the optimization problem is now convex in $D$. 
%
%A common property often imposed on dictionaries is $s$-regularity: 
%a dictionary $D$ is \emph{$s$-regular} if for all $\theta$  such that $\lVert\theta\rVert_0 \leq s$, it holds that %$D\theta \neq 0$.
For an %$s$-regular 
overcomplete dictionary, 
the general Vector Selection problem is ill defined, 
as there can be multiple solutions for a data point $x$.
Overcoming this by framing the problem as a minimization problem is exceedingly difficult.
Indeed under generic assumptions, %of \emph{$s$-regularity}, 
%i.e. $D \theta \neq 0$ for all $\theta$ with $\lVert\theta\rVert_0 \leq s$, 
%even determining the minimum $l_0$ norm of $\theta$ when D and xi are known is NP-hard.
the Vector Selection problem has  been shown to be NP-hard
by reduction to the Exact Cover by 3-set problem \cite{rey1994adaptive}. 
%
%for an $s$-regular dictionary, the general vector selection problem is ill defined. 
%Under this condition, 
%We can make the vector selection problem well defined by enforcing $2s$-regularity property on $D$. 
%A dictionary $D$ is \emph{$s$-independent} if for all $\theta_1, \theta_2$, such that $\Arrowvert \theta_1 \Arrowvert_0 \leq s$ and $\Arrowvert \theta_2 \Arrowvert_0 \leq s$, it holds that $D\theta_1=D\theta_2$ if and only if $\theta_1=\theta_2$.
%The property $s$-independence is a minimal requirement for unique invertibility. Notice that the definition given for $s$-independence is indeed equivalent to $2s$-regularity. 
%
One is then tempted to conclude that Dictionary Learning is also NP-hard. 
However, this cannot be directly deduced in general, %The error in this reasoning is that, 
since even though adding a witness $D$
turns the problem into an NP-hard problem, % (vector selection), 
it is possible that the Dictionary Learning solves to produce
a different dictionary $\acute{D}$. 

On the other hand, if  $D$ satisfies the condition of being a \emph{frame}, 
%We can further strengthen the constraints on $D$ by assuming that $D$ is a frame. 
i.e. for all $\theta$ such that $\Arrowvert \theta \Arrowvert_0 \leq s$, there exists a $\delta_s$ such that
$ (1-\delta_{s}) \leq \frac{ \Arrowvert D\theta \Arrowvert_2^2}{ \Arrowvert \theta \Arrowvert_2^2} \leq (1+\delta_{s})$,
it is guaranteed that the sparsest solution to 
the Vector Selection problem can be found via $L_1$ minimization \cite{donoho2006compressed,candes2006robust}.
%NOTE: though we know random matrix will be a frame, nobody knows how to construct a frame

\sloppy

One popular alternating minimization method is the Method of Optimal Dictionary (MOD) \cite{engan1999method},
which follows a two step iterative approach using a maximum likelihood formalism, 
and uses the pseudoinverse to compute $D$: $D^{(i+1)}=X\Theta^{(i)^T} (\Theta^n \Theta^{i^T})^{-1}.$
The MOD can be extended to Maximum A-Posteriori probability setting with different priors to take into account preferences in the recovered dictionary. 

\fussy

Similarly, $k$-SVD \cite{aharon2006svd} uses a two step iterative process, with a Truncated Singular Value Decomposition to update $D$.
This is done by taking every atom in $D$ and applying SVD to $X$ and $\Theta$ restricted to only the columns that have contribution from that atom. When $D$ is restricted to be of the form $D = [ B_1, B_2 \ldots B_L ]$ where $B_i$'s are orthonormal matrices, a more efficient pursuit algorithm
is obtained for the sparse coding stage using a block coordinate relaxation. 

Though alternating minimization methods work well in practice, there is no theoretical guarantee that the their results will converge to a true dictionary.
Several recent works %deals with Questions~\ref{question:general_position} and~\ref{question:size_GDL} 
give provable algorithms under stronger constraints on $X$ and $D$.
Spielman et.\ al \cite{spielman2013exact} give an $L_1$ minimization based approach which is provable to find the exact dictionary $D$, but requires $D$ to be a basis. 
Arora et.\ al \cite{arora2013new} and Agarwal et.\ al \cite{agarwal2013exact} 
independently give provable non-iterative algorithms for learning approximation of overcomplete dictionaries. 
Both of their methods are based on an overlapping clustering approach to find data points sharing a dictionary vector, 
and then estimate the dictionary vectors from the clusters via SVD. 
%using clustering techniques.
The approximate dictionary found using these two algorithms can be in turn used in iterative methods like k-SVD as the initial estimation of dictionary, leading to provable convergence rate \cite{agarwal2013learning}.
However, these overlapping clustering based methods require the dictionaries to have the \emph{pairwise incoherence} property
which is much stronger than the frame property.

\medskip

In this paper, 
we understand the Dictionary Learning problem from an intrinsically geometric point of view. 
Notice that %in a Dictionary Learning problem, 
each $x\in X$ lies in an $s$-dimensional subspace $\suppx{x}$,
which is the span of $s$ vectors $v \in D$ that form the \emph{support} of $x$. 
The resulting \emph{$s$-subspace arrangement} $\sxd = \{ (x,\suppx{x}): x\in X \}$ has an underlying 
labeled (multi)hypergraph
$\hsxd = (\cald, \calsxd)$, where $\cald$ denotes the index set of the dictionary $D$ and 
$\calsxd$ is the set of (multi)hyperedges over the indices $\cald$ 
corresponding to the labeled sets $(x,\suppx{x})$.
%the (multi)hyperedge set $\calsxd$ consists of the index sets corresponding to the labeled sets $(x,\suppx{x})$.
The word ``multi'' appears because if $\suppx{x_1} = \suppx{x_2}$ for data points 
$x_1,x_2 \in X$ with $x_1\not = x_2$, 
then that support set of dictionary vectors (resp.\ their indices)  is multiply represented in $\sxd$ (resp.\ $\calsxd$)
as labeled sets $(x_1,\suppx{x_1})$ and $(x_2,\suppx{x_2})$.
%We denote 
%the sizes of these multisets as $|\sxd|$ (resp. $|\calsxd|$).

Note that there could be many dictionaries $D$ 
and for each $D$, many possible subspace arrangements $\sxd$ that are solutions to the 
Dictionary Learning problem.

\section{Contributions}

\textbf{Contribution to machine learning}:
In this paper, we focus on the version of  Dictionary Learning where the underlying hypergraph is specified. 
%enabling us to analyze the problem
%using machinery from algebraic and combinatorial geometry.
%
\begin{problem}[Fitted Dictionary Learning]  \label{prob:fitted_dictionary_learning}
Let $X$ be a given set of data points in $\mathbb{R}^d$. 
For an unknown  dictionary
$D = [v_1,\ldots, v_n]$ that $s$-represents $X$, 
we are given the hypergraph $\hsxd$ of the underlying subspace arrangement $\sxd$.
Find any dictionary $\acute{D}$ of size $|\acute{D}| \leq n$, 
that is consistent with the hypergraph $\hsxd$. 
\end{problem}

%In this paper, we cast Dictionary Learning in a noiseless, geometric setting, 
%and give the following contributions:
Our contributions to machine learning in this paper are as follows:
%In this paper, we are interested in lower bounding $|D|$ for general $X$.
%
\begin{itemize}
\item As the \emph{main result}, we use combinatorial rigidity techniques to obtain a complete characterization of the %subspace arrangement 
hypergraphs $\hsxd$
that generically yield 
(a) at least one solution dictionary $D$, and
(b) a locally unique solution dictionary $D$ (i.e.\ at most a finite number of isolated solution dictionaries)
of the specified size (see Theorem \ref{thm:rigidity_condition}).
To the best of our knowledge, this paper pioneers the use of combinatorial rigidity for problems related to Dictionary Learning.
\item We are interested in minimizing $|D|$ for general $X$. However, 
as a \emph{corollary of the main result}, we obtain that
if the data points in $X$ are highly general, for example, 
 picked uniformly at random from the sphere $S^{d-1}$,
%picked uniformly at random over $\mathbb{R}^d$, 
then when $s$ is fixed,
$|D| = \Omega(|X|)$ with probability 1 (see Corollary \ref{cor:dictionary_size_bound}).
\item As a \emph{corollary to our main result}, we obtain a Dictionary Learning algorithm for sufficiently general data $X$, i.e.\ 
requiring sufficiently large dictionary size $n$ (see Corollary \ref{corr:straight_forward_algo}). 
\item We provide a systematic classification of problems related to Dictionary Learning 
together with various approaches, assumptions required and performance
%problems with graded input. 
%conditions and  results. 
(see Section \ref{sec:review}).   
%
%draw formal relationships among them. This leads to a view of dictionary
%learning as the minimum generating set of a subspace arrangement, and a new
%class of algorithms which applies subspace clustering techniques and intersection
%algorithms to learn the dictionary.
%\end{itemize}
\end{itemize}

\smallskip

\noindent\textbf{Contribution to combinatorial rigidity}:

In this paper, we follow the combinatorial rigidity approaches \cite{asimow1978rigidity,white1987algebraic}
to give a complete combinatorial character for the geometric incidence system of the Fitted Dictionary Learning problem.
Specifically, 

%we follow   to
%give a complete combinatorial characterization 
%starting from the initial representation as a nonlinear algebraic system.

\begin{itemize}
\item We formulate the Fitted Dictionary Learning problem 
as a nonlinear algebraic system $\algesystem$. 
%and give a complete formulation process starting from algebraic representation.

\item We apply  classic method of Asimow and Roth \cite{asimow1978rigidity} to generically linearize the algebraic system $\algesystem$.

\item  We apply another well-known method of White and Whiteley \cite{white1987algebraic} to combinatorially characterize the rigidity of the underlying hypergraph $\hsxd$
and give so-called pure conditions that capture \emph{non-genericity}. 

%\item  Finally, in order to generalize the proof of (2) to hypergraphs, we use the map-graph characterization of  Streinu and Theran \cite{streinu2009sparse},
%and adapt $\hsxd$ as expanded mutli-hypergraphs. 
\end{itemize}

% from algebraic systems. 
%For more details of technical challenges and significance, see Section \ref{sec:significance}.

To our best knowledge, 
the only known results with a similar flavor are \cite{haller2012body,lee-st.john2013combinatorics} which characterize the rigidity of Body-and-cad frameworks. 
However, these results are dedicated to specific frameworks in 3D instead of arbitrary dimension subspace arrangements and hypergraphs, 
and their formulation process start directly with the linearized Jacobian.

Note that although our results are stated for uniform  hypergraphs $\hsxd$
(i.e.\ each subspace in $\sxd$ has the same dimension), 
they can be easily generalized to non-uniform underlying hypergraphs.

 \section{Systematic classification of problems closely related to Dictionary Learning and previous approaches}
 \label{sec:review}

By imposing a systematic series of increasingly stringent constraints on the input, 
we classify previous approaches to Dictionary Learning 
as well as a whole set of independently interesting problems closely related to Dictionary Learning. 
A summary of the input conditions and results of these different types of Dictionary Learning approaches 
can be found in Table \ref{tab:classification}. 

%Consider modeling the data $X$ from a generative model.
%There are a few choices for generative models that produce data readily modeled by a dictionary. 
%In its most general form we are asked to determine an unknown dictionary $D$ and a set of unknown coefficients %$\lambda$,
%$\Theta = \{\theta_1 \ldots \theta_m\}$, 
%given a set of sample points %$X$ such that for each $x \in X$, $x = \sum_{v \in \suppx{x}} \lambda_v \, v$.
%$X=\{x_1 \ldots x_m\}$ such that $x_i=D\theta_i$ where  $\Arrowvert \theta_i \Arrowvert_{0} \leq s$. 

%Alternatively, after learning $D$, if we are given a new data point $x$ 
%that is known to have a $s$-sparse representation over $D$ (i.e.\ $x$ lies on the $s$-subspace arrangement $\sxd$), we would like to minimize the complexity of the vector selection problem of finding $\suppx{x}$,
%which depends not only on $|D|$, but also on the characteristics of $D$. 

%\subsection{Related Problems}

%In this section, 
%we identify related problems of Dictionary Learning 
%and draw formal relationships among them. 
%This leads to class of algorithms that apply subspace clustering techniques and intersection
%algorithms to learn the dictionary.

%Next we classify problems closely related to Dictionary Learning.

A natural restriction of the general Dictionary Learning problem is the following.
We say that a set of data points $X$ \emph{lies on} a set $S$ of $s$-dimensional subspaces 
if  for all $x_i \in X$, there exists $S_i \in S$ such
that $x_i \in S_i$.
\begin{problem} [Subspace Arrangement Learning] \label{prob:subspace_arrangement_learning}
Let $X$ be a given set of data points that are known to lie on a set $S$ of
$s$-dimensional subspaces of $\mathbb{R}^d$, where $|S|$ is at most $k$.
(Optionally assume that the subspaces in $S$ have bases such that their
union is a frame).
\emph{Subspace arrangement learning} finds any subspace arrangement $\acute{S}$ of
$s$-dimensional subspaces of $\mathbb{R}^d$ satisfying these conditions, 
i.e.\ $|\acute{S}| \leq k$, $X$ lies on $\acute{S}$, (and optionally the union of the bases of $\acute{S}_i \in \acute{S}$ is a frame).
\end{problem}

There are several known algorithms for learning subspace arrangements. %For a survey the reader is referred to \cite{vidal2011subspace}. 
Random Sample Consensus (RANSAC) \cite{vidal2011subspace} is an approach to learning subspace arrangements that isolates, one subspace at a time, via random sampling.
When dealing with an arrangement of $k$ $s$-dimensional subspaces, for instance, the method samples $s+1$ points
which is the minimum number of points required to fit an $s$-dimensional subspace. The procedure then finds and discards
inliers by computing the residual to each data point relative to the subspace and selecting the points whose residual is below 
a certain threshold. The process is iterated until we have $k$ subspaces or all points are fitted.
RANSAC is robust to models corrupted with outliers. 
Another method called Generalized PCA (GPCA) \cite{vidal2005generalized} %,ma2008estimation} 
uses techniques from algebraic geometry   for subspace clustering, 
%GPCA 
finding a union of $k$ subspaces by factoring a homogeneous polynomial of
degree $k$ that is fitted to the points $\{ x_1 \ldots x_m \}$.
Each factor of the polynomail represents the normal vector to a subspace. 
We note that GPCA can also determine $k$ if it is unknown.

The next problem is obtaining a minimally sized dictionary from a subspace arrangement.
%We say a set of vectors $D$ \textbf{$s$-spans} a set of vectors $X$, if and only if for all $x_i \in X$, there exists $\theta_i \in R^{|D|}$ such that
%$x_i = D \theta_i$ and $|\theta_i| \leq s$.
%
\begin{problem} [\begin{small}Smallest Spanning Set for Subspace Arrangement\end{small}] \label{prob:smallest_spanning_set}
Let $S$ be a given
set of $s$-dimensional subspaces of $\mathbb{R}^d$ specified by giving their bases. % (optionally their union is a frame). 
Assume their intersections are known to be $s$-represented by a set $I$ of vectors with $\arrowvert I \arrowvert$ at most $n$. 
%It is promised that there is a set $I$ of vectors with $\arrowvert I \arrowvert$ at most $n$, 
%that $s$-spans the union of their intersection. 
Find any set of vectors $\acute{I}$ that satisfies these conditions.
\end{problem}

The smallest spanning set is not necessarily unique in general, % even for $s$-regular dictionaries.
and is closely related to the intersection semilattice of subspace arrrangement \cite{bjorner1994subspace,goresky1988stratified}.
Furthermore, %we observe that, for $s$-independent sets, the smallest spanning set can be obtained via intersection of the subspace arrangements.
under the condition that the subspace arrangement comes from %an $s$-independent 
a frame dictionary, the smallest spanning set 
is the union of:
 (a) the smallest spanning set $I$ of the pairwise intersection of all the subspaces in $S$;
 (b) any points outside the pairwise intersections that, together with $I$, completely $s$-span the subspaces in $S$. 
 This directly leads to a recursive algorithm for the smallest spanning set problem. %The dictionary is now obtained from picking $m$ atoms from the intersection of the subspaces and the remaining spanning sets.

%\subsubsection{Problem Relationships and High Level Algorithms }
%
When $X$ contains sufficiently dense data to solve Problem \ref{prob:subspace_arrangement_learning}, 
Dictionary Learning reduces to problem \ref{prob:smallest_spanning_set}, 
i.e.\ we can use the following two-step procedure to solve the Dictionary Learning problem: 
%thereby clarifying how the problems are related.
\begin{itemize}
 \item Learn a Subspace Arrangement $S$ for $X$ (instance of Problem \ref{prob:smallest_spanning_set}). 
 \item Recover $D$ by finding the smallest Spanning Set of $S$ (instance of Problem \ref{prob:subspace_arrangement_learning}).
\end{itemize}

Note that it is not true that the decomposition strategy should always be applied for the same sparsity $s$. %, the constant in the generative model. % (discussed in Section 3.3.1). 
The decomposition starts out
with the minimum given value of $s$ and is reapplied with iteratively higher $s$ if a solution
has not be obtained.

%=============

A natural restriction of this two step problem learning subspace arrangement followed by spanning set is the following,
%Alternatively, we may have the 
where the data set $X$ is given in \emph{support-equivalence} classes. 
%First, we introduce the notion of \emph{support-equivalence} of data points in $X$ with respect to a dictionary $D$.
For a given subspace $t$ in the subspace arrangement $\sxd$ (respectively hyperedge $h$ in the hypergraph's edge-set $\calsxd$), 
let $X_t = X_h  \subseteq X$ be the equivalence class of data points $x$ such that $\mathit{span}(\suppx{x}) = t.$ 
We call the data points $x$ in a same $X_h$ as \emph{support-equivalent}.
\begin{problem} [Dictionary Learning for Partitioned Data] \label{prob:support_equivalence_learning}
Given data $X$ partitioned into $X_i \subseteq X$, 
(1) What is the minimum size of $X$ and $X_i$'s guaranteeing 
that there exists a locally unique dictionary $D$ 
for a $s$-subspace arrangement $\sxd$ satisfying $|D| \le n$, %and $|\sxd| \le m$, 
and $X_i$ represents the support-equivalence classes of $X$ with respect to $D$?
(2) How to find such  a dictionary $D$? % and $s$-subspace arrangement $\sxd$? 
%\item Find a dictionary $D$ and $s$-subspace arrangement $\sxd$
%satisfying $|D| \le n$, and $|\sxd| \le m,$ such that $X_i$ represent the support-equivalence classes of $X$ with respect to $D$.
\end{problem}

With regard to the problem of minimizing $|D|$,  
very little is known for simple restrictions on $X$. 
For example the following question is open.
\begin{question} \label{question:general_position}
Given a general position assumption on $X$,
what is the best lower bound on $|D|$ for Dictionary Learning?
Conversely, are smaller dictionaries possible than indicated by %the lower bound of 
Corollary~\ref{cor:dictionary_size_bound} (see Section~\ref{sec:result}) under such an assumption?
\end{question}

Question \ref{question:general_position} gives rise to the following pure combinatorics open question
closely related to the intersection semilattice of subspace arrrangement \cite{bjorner1994subspace,goresky1988stratified}.

\begin{question}
\label{question:lattice}
Given weights $w(S)\in \mathbb{N}$ assigned to size-$s$ subsets $S$ of $[n]$. 
For $T \subseteq [n]$ with $|T| \ne s$, 
\[
w(T) = 
\begin{cases}
0 & |T| < s\\
%w(T) & |T|=s\\
\displaystyle\sum_{S \subset T, |S| = s} w(S) & |T|> s 
\end{cases}
\]
Assume  additionally the following constraint holds:
for all subsets $T$ of $[n]$ with $s \le |T| \le d$, $w(T) \le |T|-1$.
Can one give a nontrivial upper  bound on $w([n])$?
\end{question}

%In traditional Dictionary Learning, the data set $X$ (usually assumed to be generated from a hidden dictionary $D$ with certain distribution of $\Theta$) is the sole input.
The combinatorial characterization given by Theorem \ref{thm:rigidity_condition} leads to the following question
for general Dictionary Learning.
\begin{question}
\label{question:size_GDL}
What is the minimum size of a data set $X$ such that 
the Dictionary Learning for $X$ has a locally unique
solution dictionary $D$ of a given size? 
%and associated  $s$-subspace arrangement $\sxd$ of a certain size?
 What are the geometric characteristics of such an $X$? 
\end{question}

A summary of the input conditions and results of these different types of Dictionary Learning problems 
can be found in Table \ref{tab:classification}.

\renewcommand{\arraystretch}{1.2}
\begin{table*}[hptb]
\centering
\begin{footnotesize}
\resizebox{\textwidth}{!}{
\begin{tabular}{|p{0.12\textwidth}|p{0.1\textwidth}|p{0.1\textwidth}|p{0.125\textwidth}|p{0.23\textwidth}|p{0.165\textwidth}|p{0.18\textwidth}|}
\hline
\multirow{2}{*}{} & \multicolumn{3}{c|}{Traditional Dictionary Learning}  & 
\multirow{2}{\hsize}{Dictionary Learning via subspace arrangement and spanning set} & 
\multirow{2}{\hsize}{ Dictionary Learning for Segmented Data} & 
\multirow{2}{\hsize}{Fitted Dictionary Learning (this paper)}\\ \cline{2-4}
 & Alternating Minimization Approaches & Spielman et. al \cite{spielman2013exact} & Arora et. al \cite{arora2013new}, Agarwal et. al \cite{agarwal2013exact} &  	&  &  \\ \hline
 
%Input of the problem and Conditions & \multicolumn{3}{p{0.3\hsize}|}{$X$ generated from certain distribution of $\Theta$} & Partitioned Data & $X$ with underlying hypergraph & Dense enough data $X$ such that the subspace arrangement is immediate \\ \hline

 \multirow{2}{\hsize}{Input and Conditions}  & \multirow{2}{\hsize}{$D$ satisfies frame property} & \multicolumn{2}{p{0.24\hsize}|}{$X$ generated from hidden dictionary $D$ and certain distribution of $\Theta$} & \multirow{2}{\hsize}{$X$ with promise that each subspace / dictionary support set is shared by sufficiently many of the data points in $X$} & \multirow{2}{\hsize}{Partitioned / segmented Data $X$} & \multirow{2}{\hsize}{Generic data points $X$ (satisfying pure condition) with underlying hypergraph specified}   \\ \cline{3-4}
& &  $D$ is a basis &  $D$~is~pairwise incoherent & & & \\ \hline

Minimum $m$ guaranteeing existence of a locally unique dictionary of a given size $n$ & Question~\ref{question:size_GDL} & $O(n \log n)$ & $O(n^2 \log^2 n)$ & Minimum number of points to guarantee a unique subspace
arrangement that will give a spanning set of size $n$ & Problem~\ref{prob:support_equivalence_learning} & $\dfrac{d-s}{d-1} n$ (Theorem~\ref{thm:rigidity_condition}); Unknown for general position data (Question~\ref{question:general_position}) \\ \hline

Dictionary Learning algorithms & MOD, k-SVD, etc. & Algorithm from \cite{spielman2013exact} & Algorithms from \cite{arora2013new,agarwal2013exact} & 
Subspace Arrangement Learning Algorithms (Problem~\ref{prob:subspace_arrangement_learning}) and Spanning Set Finding ( Problem~\ref{prob:smallest_spanning_set}) & Problem~\ref{prob:support_equivalence_learning} and Spanning Set Finding ( Problem~\ref{prob:smallest_spanning_set}) & Straightforward algorithm (Corollary~\ref{corr:straight_forward_algo}) 
 \\ \hline

Minimum $m$ guaranteeing efficient dictionary learning & Unknown & \multicolumn{2}{p{0.2\hsize}	|}{$O(n^2 \log^2 n)$} & Unknown & Unknown  &Unknown \\ \hline
Illustrative example & \multicolumn{3}{p{0.3\hsize}|}{
%(a)\includegraphics[width=0.43\hsize]{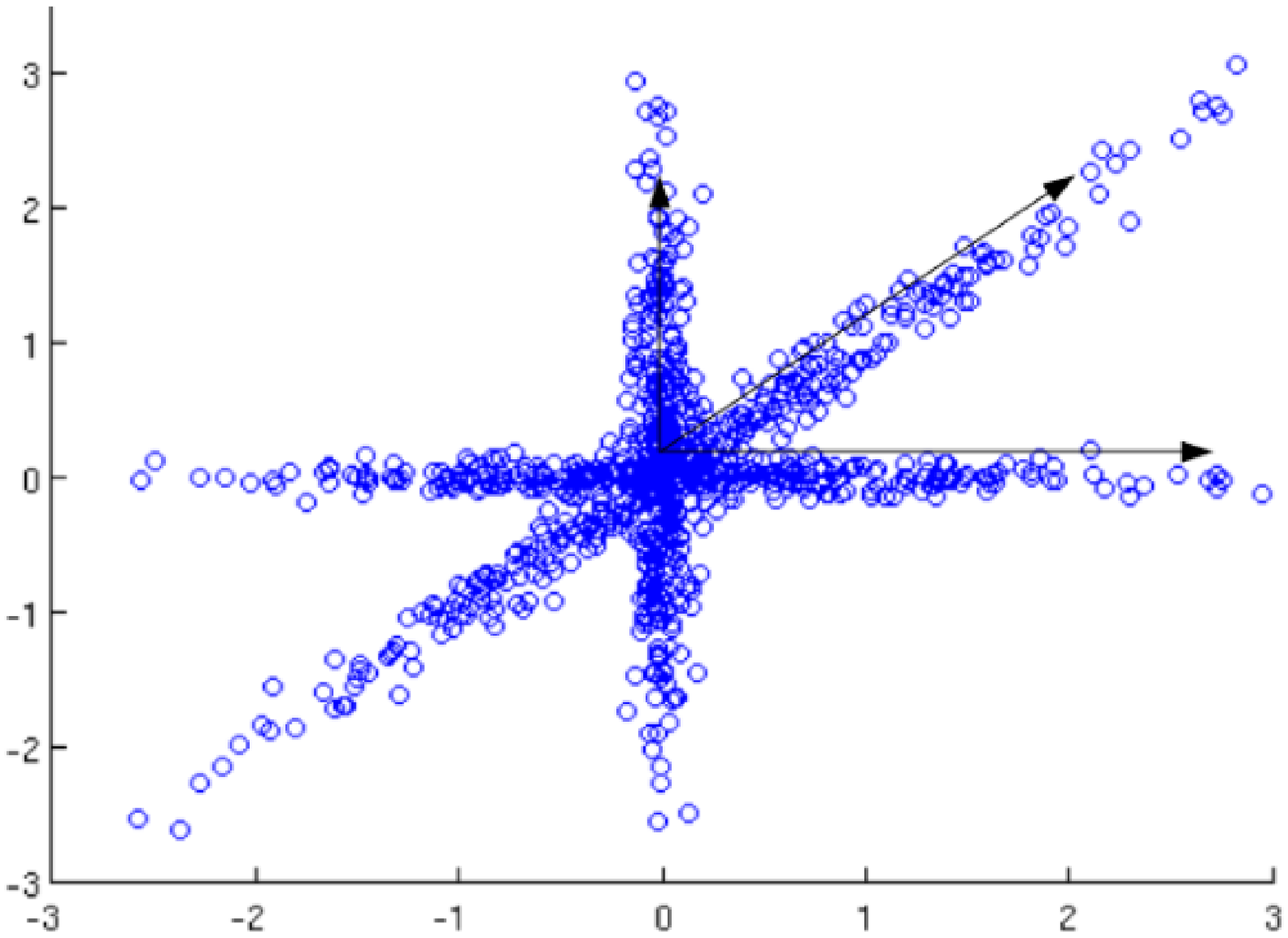}

}  & 
(a) \includegraphics[width=\hsize]{}
 & 
 (b) \includegraphics[width=\hsize]{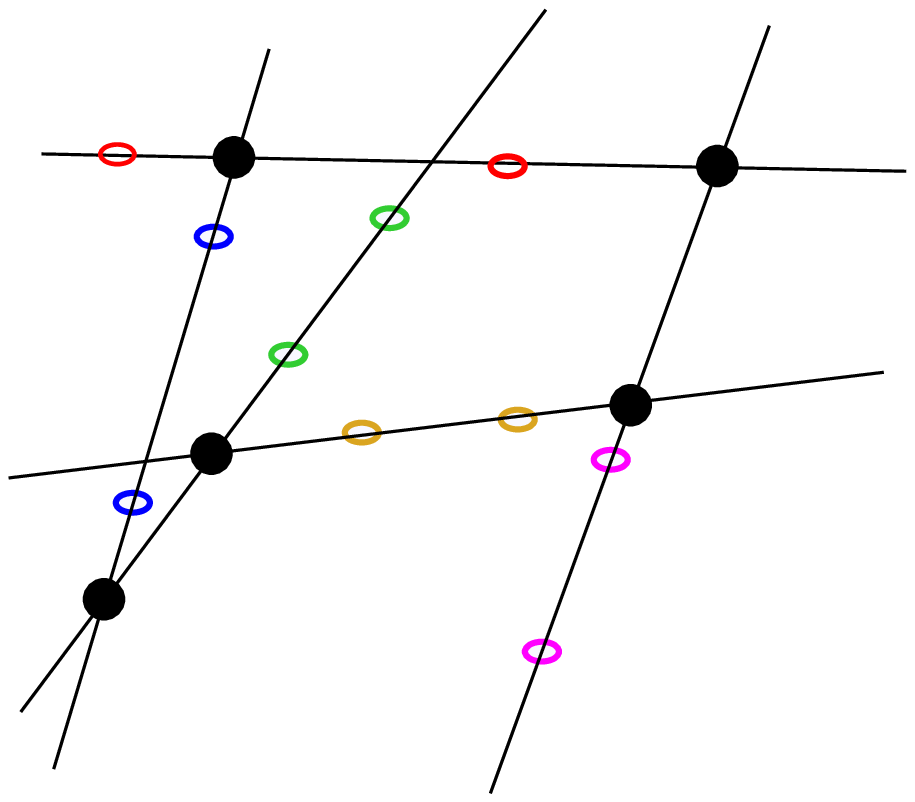}
 & 
 (c) \includegraphics[width=\hsize]{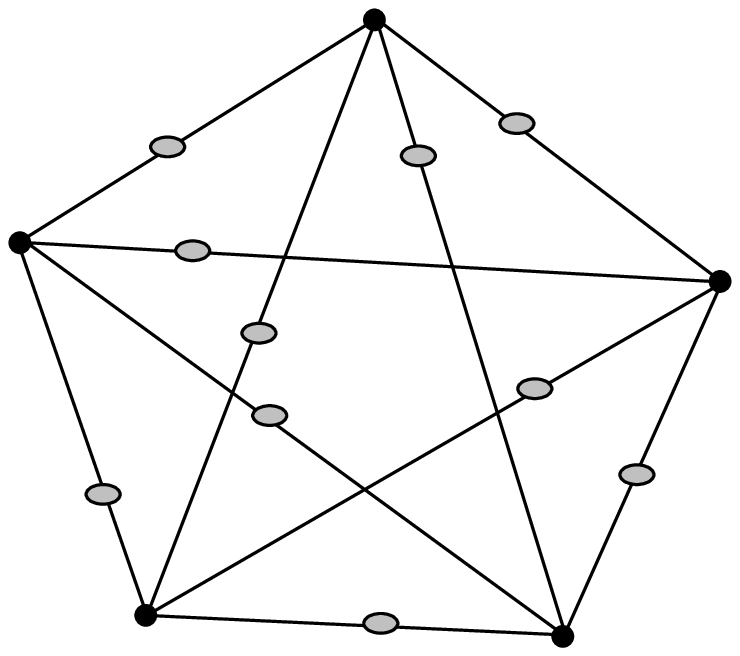}

% \includegraphics[width=\hsize]{}
%\caption{Data points in 2D  that is $1$-spanned by a dictionary of size 3}
%\label{fig:simple}
 \\ \hline
\end{tabular}}

  \end{footnotesize}
\caption{Classification of Problems}
\label{tab:classification}
\end{table*}

\section{Main Result: Combinatorial Rigidity Characterization for Dictionary Learning}
\label{sec:result}

In this section, we present 
the main result of the paper, i.e.\ a complete solution to the problem of finding a dictionary $D$ for data $X$, when 
the hypergraph $\hsxd$ of the underlying subspace arrangement is specified.
Additionally we give a (combinatorial)
characterization of the hypergraphs $H$ such that the existence and
local uniqueness of a dictionary $D$ is guaranteed for generic $X$ satisfying  $H = \hsxd$.

Since the magnitudes of the vectors in $X$ or $D$ are uninteresting, we
treat the data and dictionary points in the projective $(d-1)$-space and
use the same notation to refer to both original $d$-dimensional and
projective $(d-1)$-dimensional versions when the meaning is clear from the
context.
We rephrase the Fitted Dictionary Learning problem as the following 			
Pinned Subspace-Incidence problem for the convenience of applying 			
machinery from incidence geometry. 

\begin{problem} [Pinned Subspace-Incidence Problem] \label{prob:pinned_subspace}
Let $X$ be a given set of $m$ points (pins) in $\mathbb{P}^{d-1}(\mathbb{R})$. For every pin $x\in X$, we 
are also given the hyperedge $\suppx{x}$, %of %a set of $s$-dimensional subspaces passing through 
i.e, an index subset of an unknown set of points $D = \{v_1,\ldots,v_n\}$, %$P = \{p_1,\ldots, p_n \}$, 
such that  $x_i$ lies on the subspace spanned by $\suppx{x}$. 
%assume additionally that  
%no $s$ pins lie on any single subspace defined by one of the above hyperedges.
Find any such set D %$P[X]$ 
that satisfies the given subspace incidences. % specified by $\suppx[P]{x}$.%$\{S_i\}$. % on $P[X]$ and $X$.
\end{problem}

In the following, we give combinatorial conditions that characterize the class of inputs that recover a finite number of solutions $D$.

\subsection{Algebraic Representation, Rigidity and Linearization}

We represent the Pinned Subspace-Incidence problem in the tradition of geometric constrain solving \cite{bruderlin1998geometric,sitharam2005combinatorial},
and view it as finding the common solutions of 
an algebraic system \eqref{eq:system}: $\algesystem = 0$  (finding a real algebraic variety).
We then use the approach taken by the traditional  rigidity theory \cite{asimow1978rigidity,graver1993combinatorial} 
for characterizing generic properties of these solutions. 
The details are given in  Section \ref{sec:algebraic_representation} and \ref{sec:rigidity}.

\sloppy
We define a \emph{pinned subspace-incidence framework} 
of an underlying hypergraph $\hsxd=(\cald,\calsxd)$
to be the triple $\framework$,
where %$G=(V,E)$ is a $s$-uniform hypergraph,  
$X: \{x_1,\ldots, x_m\} \subseteq \mathbb{R}^{d-1}\rightarrow \calsxd$ 
is an assignment of a given set of pins $x_k$ to edges
$ X(x_k) = \suppx{x_k} \in \calsxd$, 
%$ X(x_k) = \{v^k_1,v^k_2, \ldots, v^k_s\} \in E$, 
and $D: \cald \rightarrow \mathbb{R}^{d-1}$ is an embedding of each vertex $j$ %$v_j$ 
into 
a point $v_j \in \mathbb{R}^{d-1}$, such that
each pin $x_k$ lies on the subspace spanned by $\{v^k_1,v^k_2, \ldots, v^k_s\}$.
Note: when the context is clear, we use $X$ to denote both the set of points 
$\{ x_1 , \ldots , x_m \}$ , as well as the above assignment of these points to
edges of $H$.
%A specific Fitted Dictionary Learning problem has $(G,X)$ as input,
%%with point set $X$ and support index set characterized by $G$, 
%and a corresponding embedding $P$ giving framework $(G,X,P)$ satisfying the incidence constraints 
%represents a solution to this problem. %in a geometric setting.
Two frameworks $(H_1,X_1,D_1)$ and $(H_2,X_2,D_2)$ are \textit{equivalent} if
$H_1=H_2$ and $X_1 =X_2$, i.e.\ they satisfy the same algebraic equations for the same labeled hypergraph and ordered set of pins. 
They are \emph{congruent} if they are equivalent and $D_1=D_2$.

\fussy
The pinned subspace-incidence system is \emph{minimally rigid} if 
it is both \emph{independent}, i.e.\  none of the algebraic constraints is in the ideal generated by the others, 
which generically implies the existence of a (possibly complex) solution $D$ to the system $\algesystem$,
and \emph{rigid}, i.e. there exist at most finitely many (real or complex) solutions.
%there exist a solution $D$ to the system $\algesystem$, and the number of solutions is finite. 
%
%Rigidity and global rigidity are often defined (slightly differently) for individual frameworks. 
%A framework $\framework$ is \emph{minimally rigid} if it becomes flexible after removing any pin. 
Rigidity is often defined (slightly differently) for individual frameworks. 
A framework $\framework$ is \emph{rigid} (i.e. \emph{locally unique}) if there is a neighborhood $N(D)$ of $D$, 
such that any framework $\framework[D']$ equivalent to $\framework$ with $D' \in N(D)$ is also congruent to $\framework$. 
A rigid framework $\framework$ is \emph{minimally rigid} if it becomes flexible after removing any pin.

We are interested in characterizing \emph{minimal rigidity} 
of the pinned subspace-incidence system and framework.
However, checking independence relative to the ideal generated by the variety is computationally hard and best known algorithms,
 such as computing Gr\"{o}bner basis, are exponential in time and space \cite{mittmann2007grobner}.
 %TODO look at Grobner basis
However, the algebraic system can be linearized at \emph{generic} or \emph{regular} (non-singular) points
(formally defined in Section \ref{sec:genericity}).
%also called \emph{generic frameworks} $\framework$ (detailed in Section \ref{sec:genericity}).
%
%
%\subsubsection{linearization}
Adapting \cite{asimow1978rigidity}, we show in Section \ref{sec:rigidity} that 
rigidity and independence (based on nonlinear polynomials) of pinned subspace-incidence systems 
are generically properties of the underlying hypergraph $\hsxd$.
Specifically, Lemma \ref{lem:generic} shows that
rigidity a pinned subspace-incidence system
is equivalent to existence of a full rank  \emph{rigidity matrix}, %of a generic framework,
obtained by taking the Jacobian $\jacobian$ of the algebraic system $\algesystem$ at a regular point.

\subsection{Statement of results}

We study the rigidity matrix to
obtain the following combinatorial characterization of 
(a) sparsity / independence, i.e.\ existence of a dictionary,  
and (b) rigidity, i.e.\ the solution set being locally unique / finite,
%of (possibly complex) solution set,
for a pinned subspace-incidence framework.

\begin{theorem}[Main Theorem] \label{thm:rigidity_condition}
A pinned subspace-incidence framework is generically minimally rigid
if and only if the underlying hypergraph $\hsxd = (\cald,\calsxd)$ satisfies 
%$(d-s)\Arrowvert\calsxd\Arrowvert = (d-1)|\cald|$ (i.e. $(d-s)|X| = (d-1)|D|$), 
$(d-s)| \calsxd | = (d-1)|\cald|$ (i.e. $(d-s)|X| = (d-1)|D|$), 
and $(d-s)|E'| \le (d-1)|V'|$ for every vertex induced subgraph $H'=(V',E')$.
The latter condition alone ensures the independence
of the framework.
\end{theorem}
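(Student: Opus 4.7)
The plan is to follow the Asimow--Roth / White--Whiteley combinatorial rigidity pipeline, specialized to the pinned subspace-incidence setting. First I would write the algebraic system $\algesystem$ explicitly: for each pin $x_k$ with support $\suppx{x_k}=\{v_1^k,\ldots,v_s^k\}$, the incidence ``$x_k\in\mathrm{span}(v_1^k,\ldots,v_s^k)$'' is a projective codimension-$(d-s)$ condition, captured at a regular point by the vanishing of $d-s$ independent $(s+1)\times(s+1)$ minors of the matrix $[v_1^k\mid\cdots\mid v_s^k\mid x_k]$. Differentiating with respect to the $(d-1)|\cald|$ dictionary unknowns then produces the rigidity matrix $\jacobian$ of size $(d-s)|\calsxd|\times(d-1)|\cald|$, whose row-block for pin $x_k$ is supported only on the $s$ column-blocks indexed by $v_1^k,\ldots,v_s^k$, since the constraint polynomial for $x_k$ depends only on those coordinates.

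By Lemma~\ref{lem:generic}, generic minimal rigidity is equivalent to $\jacobian$ being square of full rank at a regular $D$, and generic independence is equivalent to its rows being linearly independent. The global equality $(d-s)|\calsxd|=(d-1)|\cald|$ is then just the squareness condition, while hereditary sparsity is necessary for a direct reason: for any vertex-induced subgraph $H'=(V',E')$, the rows of $\jacobian$ indexed by $E'$ have all their nonzero entries in the $(d-1)|V'|$ columns indexed by $V'$, so if $(d-s)|E'|>(d-1)|V'|$ those rows are forced to be linearly dependent, contradicting independence of the full row set.

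The hard direction is the converse: that the two sparsity conditions suffice, i.e.\ that for some (and hence, by genericity, for almost every) choice of configuration, $\jacobian$ achieves the rank predicted by the counts. Here I would invoke the bracket / \emph{pure condition} method of White and Whiteley \cite{white1987algebraic}: when the hypergraph is tight, expand the maximal minor of $\jacobian$ as a polynomial in the Pl\"ucker brackets of the $v_j$'s and $x_k$'s, and show that for every $(d-s,d-1)$-tight hypergraph this bracket polynomial is not identically zero; the non-generic locus is then precisely the zero set of this pure condition, which by hypothesis our generic $X$ avoids. The main obstacle, in my view, lies precisely in establishing non-vanishing of the pure condition, which I expect to require an inductive combinatorial scheme: a Henneberg-style decomposition of $(d-s,d-1)$-tight uniform hypergraphs into vertex additions and hyperedge splits that preserve tightness, together with an explicit geometric witness at each base case and each inductive step. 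Unlike classical $(2,3)$-sparsity in the plane, here the hyperedges have fixed arity $s$ while vertex degrees can grow arbitrarily, so both the combinatorial decomposition and the geometric base cases must be developed in this generality; once the tight case is settled, the non-tight independence half of the statement follows by standard matroid-padding, extending any sparse $H$ to a tight supergraph.
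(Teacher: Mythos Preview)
Your setup and the necessity direction are correct and match the paper: the rigidity matrix has exactly the block-support structure you describe, squareness gives the global count, and the hereditary sparsity argument via restriction to the columns over $V'$ is the paper's argument as well.

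The gap is in the sufficiency direction. You propose to show non-vanishing of the pure condition by a Henneberg-style inductive decomposition of the tight hypergraphs, and you yourself flag that such a decomposition ``must be developed in this generality.'' No such Henneberg scheme is available off the shelf for $s$-uniform hypergraphs satisfying $(d-s)|E'|\le(d-1)|V'|$, and building one (together with geometric witnesses for each move) would be a separate and substantial project; as stated, this part of the plan is a hope rather than a proof.

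The paper avoids Henneberg entirely. It first passes to the \emph{expanded multi-hypergraph} $\hat H$, replacing each hyperedge by $d-s$ parallel copies, so the hypothesis becomes exactly $(d-1,0)$-tightness of $\hat H$. Then, by Tutte--Nash-Williams (Lemma~\ref{lem:map_decomposition}), $\hat H$ decomposes into $d-1$ edge-disjoint map-graphs. The determinant of the rigidity matrix $\hat M$ is Laplace-expanded along the $d-1$ column groups $C_1,\ldots,C_{d-1}$ (one per projective coordinate); each summand is a product $\prod_j\det \hat M[R_j^\sigma,C_j]$ over a row partition $\sigma$. Choosing $\sigma$ to match the map decomposition, each factor is the determinant of a matrix with the row pattern of Lemma~\ref{lem:map_det}, which is shown to be generically nonzero by the specialization ``tail entry $=1$, all other entries $=0$.'' Finally, after the row normalization the different $\sigma$'s carry distinct multilinear coefficients in the $b$-variables, so this surviving term cannot cancel against the others, and $\det\hat M\not\equiv 0$. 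The pure condition is exactly the resulting polynomial.

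In short: replace the speculative Henneberg induction by the (known) map-graph decomposition of $(d-1,0)$-tight hypergraphs, and use a coordinate-wise Laplace expansion plus a specialization to isolate a surviving monomial.
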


We relate the Fitted Dictionary Learning problem to the general Dictionary Learning problem. 
The following is a useful corollary to the main theorem, 
which gives the lower bound of dictionary size for generic data points.
\begin{corollary}[Dictionary size lower bound for generic data]\label{cor:dictionary_size_bound_generic}
Given a set of $m$ points $X=\{x_1,..,x_m\}$ in $\mathbb{R}^d$, generically there
is a dictionary $D$ of size $n$ that $s$-represents $X$ %$D\theta_i = x_i$ and $\Arrowvert\theta_i\Arrowvert_0 \le s$ 
only if $(d-s)m \le (d-1)n$. 
Conversely, if $(d-s)m = (d-1)n$ and the supports of $x_i$ (the nonzero entries of
the $\theta_i$'s) are known to form a $(d-1,0)$-tight hypergraph $H$, then generically, there
is at least one and at most finitely many such dictionaries.
\end{corollary}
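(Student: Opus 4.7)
Both directions of the corollary are obtained by re-casting dictionaries of size $n$ as pinned subspace-incidence frameworks and then invoking Theorem~\ref{thm:rigidity_condition}. The correspondence is immediate: a dictionary $D=\{v_1,\ldots,v_n\}$ that $s$-represents $X$ is precisely an embedding $D$ realizing the pinned subspace-incidence framework $(\hsxd,X,D)$, where $\hsxd$ has at most $n$ vertices and exactly $m$ hyperedges of size $s$, one per data point.

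For the \emph{only if} direction, suppose that for generic $X$ some dictionary $D$ of size $n$ exists, and let $H$ denote its induced support hypergraph on a vertex subset of $\cald$ with $|V(H)|\le n$ and $|E(H)|=m$. Generic real solvability of the algebraic system $(H,X)(D)=0$ in the unknown $D$, combined with Lemma~\ref{lem:generic}, implies that the constraints are generically independent in the sense of Theorem~\ref{thm:rigidity_condition}, because an algebraically over-determined system is generically inconsistent by a standard incidence-variety dimension count. The theorem's sparsity inequality $(d-s)|E'|\le(d-1)|V'|$ applied to the whole hypergraph then yields $(d-s)m \le (d-1)|V(H)| \le (d-1)n$.

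For the \emph{if} direction, the assumptions $(d-s)m=(d-1)n$ and $H$ being $(d-1,0)$-tight are exactly the hypotheses of Theorem~\ref{thm:rigidity_condition}. The theorem immediately asserts that the framework $(H,X,D)$ is generically minimally rigid: the independence part guarantees that at least one (possibly complex) realization $D$ exists for generic $X$, while rigidity forces the solution variety to be zero-dimensional, leaving at most finitely many isolated real dictionaries.

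The main obstacle is the \emph{only if} direction, because the support hypergraph is not fixed a priori — any dictionary of size at most $n$ could be invoked — so one must rule out every hypergraph $H$ with $(d-s)|E(H)|>(d-1)|V(H)|$ simultaneously. This is handled by the contrapositive of Theorem~\ref{thm:rigidity_condition}: for any fixed such $H$, a rank deficiency argument on the rigidity matrix $\jacobian$ from Section~\ref{sec:rigidity} shows that the projection of the incidence variety $\{(X,D):(H,X)(D)=0\}$ onto $X$-space is not dominant, so generic $X$ admits no dictionary with support hypergraph $H$. Taking the union over the finitely many hypergraphs with vertex set in $\cald$ and $m$ hyperedges of size $s$ preserves genericity, completing the argument.
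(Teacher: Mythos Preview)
Your proposal is correct and follows essentially the same approach as the paper's own proof. Both directions invoke Theorem~\ref{thm:rigidity_condition} in the same way: the converse is immediate from minimal rigidity, and the forward direction argues that any support hypergraph violating $(d-1,0)$-sparsity yields a rigidity matrix with dependent rows, hence a generically inconsistent algebraic system. Your treatment is in fact slightly more careful than the paper's at one point: you explicitly note that the support hypergraph $H$ is not fixed a priori and handle this by taking a finite union over all candidate hypergraphs, whereas the paper's proof of this corollary leaves that step implicit and only spells it out in the proof of the subsequent Corollary~\ref{cor:dictionary_size_bound}.
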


%\begin{proof}
%%One direction holds because for $(d-s)m > (d-1)n$, the system is generically overconstrained, 
%%implying that there is generically no solution. 
%%
%We first prove one direction, that there is generically no dictionary of size $|D|=n$ if $(d-s)m > (d-1)n$.
%For any hypothetical $s$-subspace arrangement $\sxd$, the expanded mutli-hypergraph 
%$\hat{H}(\sxd)$ - with the given bound for $|D|$ - 
%cannot be $(d-1,0)$-sparse.
%Hence generically, under the pure conditions of Theorem \ref{thm:rigidity_condition},
% the rigidity matrix - of the $s$-subspace framework $\hsxd$ - with
%indeterminates representing the coordinate positions of the points in $D$ -
% has dependent rows. 
% In which case, the original algebraic system $\algesystem$%\eqref{eq:system}
%(whose Jacobian is the rigidity matrix)
%will not have a  (complex or real) solution for $D$, with $X$ plugged in.
%
%%This implies that the system is generically rigid but not minimally rigid. 
%The converse is implied from our theorem since %at $(d-s)n = (d-1)m$ 
%we are guaranteed both generic independence
%(the existence of a solution) and generic rigidity (at most finitely many solutions).
%\end{proof}

Quantifying the term ``generically'' in Corollary \ref{cor:dictionary_size_bound_generic}
yields Corollaries \ref{cor:dictionary_size_bound} and \ref{corr:straight_forward_algo} below.  
%First we give a lower bound on $|D|$.
%
\begin{corollary}[Lower bound for highly general data]\label{cor:dictionary_size_bound}
Given a set of $m$ points $X=\{x_1,..,x_m\}$ picked uniformly at random from  the sphere $S^{d-1}$, 
a dictionary $D$ that $s$-represents $X$ has size at least $(\frac{d-s}{d-1})m$ with probability 1.
In other words, %for highly general data points, 
$|D| = \Omega(X)$ if $s$ and $d$ are constants.
\end{corollary}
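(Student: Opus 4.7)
The strategy is to deduce Corollary~\ref{cor:dictionary_size_bound} from Corollary~\ref{cor:dictionary_size_bound_generic} by showing that a uniformly random configuration on $(S^{d-1})^m$ is generic in the required sense with probability one. Suppose, toward contradiction, that $X$ admits some dictionary $D$ with $|D| = n < \tfrac{d-s}{d-1}m$. Let $H=(V,E)$ be the support hypergraph of the resulting representation, so $|V|\le n$, $|E|=m$, and every edge has size $s$. Taking $V'=V$ in the sparsity inequality of Theorem~\ref{thm:rigidity_condition} gives $(d-s)|E| > (d-1)|V|$, so $H$ fails the independence condition. Hence the pinned subspace-incidence system $\algesystem = 0$ is dependent: its Jacobian $\jacobian$ is rank-deficient at every regular point, and the set of pin configurations that admit any solution $D$ is contained in the vanishing locus $Z_H \subseteq (\mathbb{R}^d)^m$ of a nonzero polynomial $p_H$ in the coordinates of the pins (the ``pure condition'' produced by the rigidity argument of Section~\ref{sec:rigidity}).

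Next I would promote this algebraic statement to a probabilistic one on the sphere. The incidence constraint ``$x_i$ lies in the span of its support in $D$'' is scale-invariant in each $x_i$ individually, so $p_H$ may be taken homogeneous separately in each pin $x_i$. Consequently $p_H$ cannot vanish identically on $(S^{d-1})^m$ (else, by individual homogeneity, it would vanish on all of $(\mathbb{R}^d)^m$, contradicting $p_H \ne 0$). Therefore $Z_H \cap (S^{d-1})^m$ is a proper real algebraic subvariety of the compact smooth manifold $(S^{d-1})^m$, and so carries zero Hausdorff $m(d-1)$-measure, hence zero mass under the product uniform distribution.

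Finally, I would apply a union bound. For each integer $n < \tfrac{d-s}{d-1}m$, the number of labeled $s$-uniform multi-hypergraphs with vertex set $[n]$ and exactly $m$ hyperedges is finite, and $n$ ranges over a finite set, so there are only finitely many candidate hypergraphs $H$ to consider. The ``bad'' set
\[
Z \;:=\; \bigcup_{H} \bigl( Z_H \cap (S^{d-1})^m \bigr)
\]
is thus a finite union of measure-zero sets, and therefore has measure zero. With probability one a uniformly random $X \in (S^{d-1})^m$ lies outside $Z$, and by the preceding paragraphs cannot admit any dictionary of size $n < \tfrac{d-s}{d-1}m$. The asymptotic conclusion $|D| = \Omega(|X|)$ is immediate once $s$ and $d$ are treated as constants.

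I expect the main obstacle to be the non-vanishing of $p_H$ on $(S^{d-1})^m$, i.e.\ verifying that the pure condition extracted from $\jacobian$ is not spuriously annihilated by the sphere constraint. The separate-homogeneity argument above should do the job, but it requires a careful inspection of how $p_H$ is built from minors of $\jacobian$ in Section~\ref{sec:rigidity}; every other step (dependence implies vanishing of $p_H$, finite union of measure-zero sets, absolute continuity of the uniform measure with respect to Hausdorff measure) is routine.
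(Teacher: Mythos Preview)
Your approach is essentially the one the paper takes: reduce to Corollary~\ref{cor:dictionary_size_bound_generic}, observe that the number of candidate hypergraphs is finite, and conclude that the exceptional set of pin configurations is a finite union of measure-zero sets. Your separate-homogeneity argument for passing from $(\mathbb{R}^d)^m$ to $(S^{d-1})^m$ is a genuine addition; the paper simply asserts the measure-zero conclusion on the sphere without comment.

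One terminological point worth correcting: the polynomial you call $p_H$ is \emph{not} the pure condition of Section~\ref{sec:rigidity}. The pure condition there is a polynomial in the coordinates of the full framework $(X,D)$ whose non-vanishing certifies that the rigidity matrix attains its combinatorially predicted rank; it is used when $H$ \emph{is} $(d-1,0)$-sparse. In your situation $H$ is over-counted, and what you need is that the projection of the solution variety $\{(X,D):\algesystem=0\}$ onto the $X$-factor is a proper constructible set. That follows from a straightforward dimension count (the variety has dimension at most $m(d-1)+n(d-1)-m(d-s)<m(d-1)$ once one knows the equations are generically independent), not from the pure condition per se. The paper is equally informal at exactly this step, so this is not a gap relative to the paper --- but if you tighten the write-up, obtain $p_H$ via elimination/projection rather than by invoking Section~\ref{sec:rigidity}.
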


Proofs of Theorem \ref{thm:rigidity_condition} and Corollary \ref{cor:dictionary_size_bound_generic}, \ref{cor:dictionary_size_bound} are given in Section \ref{sec:proof}.

Corollary \ref{corr:straight_forward_algo} provides an algorithm to construct a dictionary for highly
general data points. The algorithm is elaborated in Section \ref{sec:algorithm}.

\begin{corollary}[Straightforward Dictionary Learning Algorithm]\label{corr:straight_forward_algo}
Given a set of $m$ points  $X=[x_1 \ldots x_m]$ picked uniformly at random from  the sphere $S^{d-1}$, 
%, and the required sparsity $s$, 
we have a straightforward algorithm to construct a dictionary $D=[ v_1 \ldots v_n ]$ that $s$-represents $X$,
where $n = \left( \dfrac{d-s}{d-1} \right) m$.
The time complexity of the algorithm is $O(m)$ when we treat $d$ and $s$ as constants. 
\end{corollary}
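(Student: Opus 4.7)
The plan is to exploit our freedom in choosing the underlying hypergraph and reduce the problem to a sequence of constant-size polynomial solves, one per data block. I would construct a $(d-1,0)$-tight hypergraph $H$ on $n = m(d-s)/(d-1)$ vertices and $m$ hyperedges inductively: start from a small $(d-1,0)$-tight seed $H_0$, then append $m/(d-1) - O(1)$ \emph{blocks}, each contributing $(d-1)$ new hyperedges and $(d-s)$ new vertices and attached to the current hypergraph along a prescribed pattern of shared old vertices. Since $(d-s)(d-1)=(d-1)(d-s)$, each block preserves the global equality $(d-s)|E|=(d-1)|V|$; choosing the attachment pattern carefully preserves local sparsity $(d-s)|E'|\le(d-1)|V'|$ on every vertex-induced subgraph. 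By Theorem~\ref{thm:rigidity_condition}, the resulting instance is generically minimally rigid.

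The algorithm then processes blocks in the order of construction. At block $k$, the $(d-s)$ new dictionary vectors are unknowns and the $(d-1)$ new pins supply $(d-s)(d-1)$ polynomial equations whose remaining data come from previously solved dictionary vectors and from the input $X$. Because both equations and unknowns have total projective dimension $(d-s)(d-1)$, which is constant in $d,s$, each block's local system is a constant-size polynomial system, generically with finitely many solutions by B\'ezout, hence solvable in $O(1)$ time via elementary elimination. For instance, when $s\ge d/2$ each new pin can use all $(d-s)$ new vectors plus $(2s-d)$ old ones, producing a system that triangulates cleanly; when $s<d/2$, the $(d-1)$ new pins are partitioned among the $(d-s)$ new vectors via shared old vertices so that each new vector is determined by a bounded subsystem. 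Summing over $O(m/(d-1))$ blocks yields total running time $O(m)$.

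To upgrade the genericity guarantee of Theorem~\ref{thm:rigidity_condition} to probability~$1$ on uniformly random $X\in (S^{d-1})^m$, I would observe that the set of $X$ for which some block-level Jacobian drops rank or some local system fails to admit a real solution is contained in a proper real algebraic subvariety of $(\mathbb{P}^{d-1})^m$ cut out by the pure conditions of Section~\ref{sec:rigidity} together with the block-level non-vanishing conditions. Since the uniform spherical measure on $S^{d-1}$ pushes forward to an absolutely continuous measure on $\mathbb{P}^{d-1}$, any such proper subvariety has measure zero, and therefore the algorithm succeeds with probability~$1$, matching exactly the dictionary size $n=m(d-s)/(d-1)$ promised by Corollary~\ref{cor:dictionary_size_bound}.

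The hard part will be designing the block templates so that each local solve is simultaneously $O(1)$ time, preserves $(d-1,0)$-tightness, and generically admits a \emph{real} solution for the new dictionary vectors. Tightness and complexity are combinatorial and routine. Real solvability is more delicate, especially when $s<d/2$, where the $(d-s)$ new vectors cannot all appear in every new pin and the block template must be finer. I would handle this by showing that real solvability is an open condition on the input $X$, so once it holds for a concrete seed configuration (verifiable by explicit construction), it propagates inductively through each subsequent block on an open dense subset of inputs, which by the measure-zero argument above suffices for the probability-$1$ claim.
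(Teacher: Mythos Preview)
Your proposal is essentially the paper's own approach: build a $(d-1,0)$-tight hypergraph as a constant-size seed $H_0$ plus $O(m/(d-1))$ blocks of $(d-s)$ new vertices and $(d-1)$ new hyperedges, then solve each block's constant-size algebraic system in $O(1)$ time, invoking Theorem~\ref{thm:rigidity_condition} and a measure-zero argument for the probability-$1$ claim. The one structural simplification the paper makes that you might adopt is that, after using the pebble game to find the seed $H_0$ and the first block template $B_1$, every subsequent block $B_i$ is attached to the \emph{same} fixed set of base vertices $V_B\subset V_0$; this makes the block-level solves mutually independent rather than sequentially dependent, which obviates your case analysis on $s\gtrless d/2$ and your worry about choices propagating through the construction (and, incidentally, your attention to real solvability is more careful than the paper's, which only asserts a possibly complex solution at each step).
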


\section{Proof of Results}

In this section,  we provide details and proof for the results given in the last section. 

In the following,  
we denote a minor of a matrix $A$ using the notation $A[R,C]$, 
where $R$ and $C$ are index sets of the rows and columns contained in the minor, respectively.
In addition, $A[R,\Cdot]$ represents the minor containing all columns and row set $R$, 
and $A[\Cdot,C]$ represents the minor containing all rows and column set $C$.
\subsection{Algebraic Representation}
\label{sec:algebraic_representation}

In this section, 
we provide the details in deriving the algebraic system of equations $\algesystem = 0$ \eqref{eq:system}
to represent our problem,
in the tradition of geometric constrain solving \cite{bruderlin1998geometric,sitharam2005combinatorial}.

Consider a pin $x_k$ on the subspace spanned by %dictionary vectors 
the point set  $S^k = \{v_1^k, v_2^k, \ldots, v_s^k\}$. 
Using homogeneous coordinates, %the %point can be written in terms of $p_i$ and $p_j$ as follows:
we can write this incidence constraint  by letting 
all the $s \times s$ minors of the $s \times (d-1) $ matrix
\[
E^k=
\left[
\begin{array}{c}
v^k_1 - x_{k}	\\
 v^k_2 -x_{k} \\
\vdots \\
 v^k_s - x_{k} 
\end{array} \right]
\]
be zero, where $v^k_i = [\begin{array}{cccc}v^k_{i,1} & v^k_{i,2} & \ldots & v^k_{i,d-1}\end{array}] $ and
$x_{k} = [ \begin{array}{cccc}x_{k,1} & x_{k,2} & \ldots & x_{k,d-1} \end{array}]$.
So each incidence can be written as $d-1 \choose s$ equations: 
\begin{equation} \label{eq:constraint}
\det\left(E^k[\Cdot, C(t)]\right) = 0, \qquad 1 \le t \le {d-1 \choose s}
\end{equation} 
where $C(t)$ enumerates all the $s$-subsets of columns of $E^k$. 
%The set $R(t)$ contains indices  $\{l_1,l_2, \ldots, l_s \}$.
Note that only $d-s$ of these $d-1 \choose s$ equations are independent,
%TODO reason?
as the span of $S^k$
is a $s$-dimensional subspace in a $d$-dimensional space,
which only has $s(d-s)$ degrees of freedom.

Given the hypergraph $H=\hsxd$ of the underlying subspace arrangement,
the pinned subspace-incidence  problem   now reduces to solving a system of $m{d-1 \choose s}$ equations
(or, equivalently, $m(d-s)$ independent equations), %, where each of the $n$ pins has $d-1 \choose s$ equations. %, 
each of form \eqref{eq:constraint}. 
The system of equations sets a multivariate function  $\algesystem$ to $0$:
\begin{equation} \label{eq:system}
\algesystem = \begin{cases}
\qquad\ldots\\
\det\left(E^k[\Cdot, C(t)]\right) & \qquad = \quad 0 \\ 
\qquad\ldots
\end{cases}
\end{equation}
% $F(P,X)$ is a vector valued function from $R^{n(d-1)+m(d-1)}$ to $R^{m{d-1 \choose s}}$. 
When viewing $X$ as a fixed parameter, 
%we let $F_X(P) = F(P,X)$, 
$\algesystem$ is a vector valued function from $\mathbb{R}^{n(d-1)}$ to $\mathbb{R}^{m{d-1 \choose s}}$ parameterized by $X$.

Without any pins, the points in $D$ have in total $n(d-1)$ degrees of freedom. 
In general, putting $r$ pins on an $s$-dimensional subspace of $d$-dimensional space 
gives an $(s-r)$-dimensional subspace of a $(d-r)$-dimensional space, 
which has $(s-r)((d-r)-(s-r)) = (s-r)(d-s)$ degrees of freedom left.
%So $(d-s)$ degrees of freedom is removed by a pin on any $(s-k)$ dimensional subspace of a 
%$(d-k)$ dimensional space.
So every pin potentially removes $(d-s)$ degrees of freedom.

%An $s-k$ dimensional subspace of a $d-k$ dimensional space has $(s-k)(d-k -(s-k))$ = $(s-k)(d-s)$
%degrees of freedom.
%A pin on this $s-k$ dimensional subspace of $d-k$ dimensional space makes it an $s-(k+1)$ dimensional
%subspace of a $d-(k+1)$ dimensional space. 
%Now it has $(s-(k+1))(d-(k+1) - (s-(k+1))) = (s-k-1)(d-s)$ degrees of freedom.
%
%In general, if you fix $r$ vectors (put $r$ pins) on an $s$-dimension subspace,
%then
%you have $(s-r)(d-s)$ degrees of freedom left, i.e, each pin removes $(d-s)$ degrees of freedom.
%So $ (d-s)$ degrees of freedom is removed by a pin on any $(s-k)$ dimensional subspace of a 
%$(d-k)$ dimensional space.

As introduced in Section \ref{sec:result},
the pinned subspace-incidence system $\algesystem$ is \emph{independent} %of the algebraic system \eqref{eq:system} is defined 
%The pinned subspace-incidence framework $(H,X,P)$ is \emph{independent}
if none of the algebraic constraints is in the ideal generated by the others. 
Generally, independence implies the existence of a solution $D$ to the system $\algesystem$, where X is fixed. % exist if the algebraic system is independent. 
The system is \emph{rigid}  if there exist at most finitely many (real or complex) solutions.  %to the algebraic system \eqref{eq:system}. 
%i.e.\ for any framework $\framework$ corresponding to a solution, there is a neighborhood $N(D)$ of $D$, 
%such that any framework $\framework[D']$  equivalent to $\framework$ with $D' \in N(D)$ is also congruent to $\framework$.
The system is \emph{minimally rigid} if %there exists a solution, and there are at most finitely many solutions.
it is both rigid and independent.
The system is \emph{globally rigid} if there exists at most one solution.
%Rigidity and global rigidity are often defined (slightly differently) for individual frameworks. 
A pinned subspace-incidence framework $\framework$ is \emph{rigid} (i.e. \emph{locally unique}) if there is a neighborhood $N(D)$ of $D$, 
such that any framework $\framework[D']$ equivalent to $\framework$ with $D' \in N(D)$ is also congruent to $\framework$. 
A rigid framework $\framework$ is \emph{minimally rigid} if it becomes flexible after removing any pin.
A framework $\framework$ is \emph{globally rigid} (i.e. \emph{globally unique}) if any framework equivalent to $\framework$ is also congruent to $\framework$. 

\subsubsection{Genericity}
\label{sec:genericity}

We are interested in characterizing minimal rigidity 
of the pinned subspace-incidence system and framework.
However, checking independence relative to the ideal generated by the variety is computationally hard and the best known algorithms
 %such as computing Grobner basis, 
 are exponential in time and space \cite{mittmann2007grobner}. 
However, the algebraic system can be linearized at \emph{generic} or \emph{regular} 
(non-singular) points
whereby independence and rigidity of the algebraic pinned subspace-incidence system $\algesystem$ 
reduces to linear independence and maximal rank at \emph{generic} frameworks. 
% and then independence can be checked for \emph{generic} configurations.

In algebraic geometry, a property being generic intuitively means that the property holds on the open dense complement of an (real) algebraic variety. Formally, 

\begin{definition}
\label{def:generic_framework}
A framework $\framework$ is generic w.r.t.\ a property $Q$ if and only if there exists a neighborhood $N(D)$ such that for all frameworks $\framework[D']$ with $D' \in N(D)$,
$\framework[D']$ satisfies $Q$ if and only if $\framework$ satisfies $Q$.
\end{definition}

Furthermore we can define generic properties of the hypergraph.

\begin{definition}
\label{def:generic_property}
A property $Q$ of frameworks is generic (i.e, becomes a property of the
hypergraph alone) if for all graphs $H$, either all generic (w.r.t.\ $Q$) frameworks
$\framework$ satisfies $Q$, or all generic (w.r.t.\ $Q$) frameworks $\framework$ do not satisfy
$Q$.
\end{definition}

A framework $\framework$ is \emph{generic} for property $Q$ if  an algebraic variety $V_Q$ specific to $Q$ is
avoided by the given framework $\framework$.
Often, for convenience in relating $Q$ to other properties, a more
restrictive notion of genericity is used than stipulated by Definition \ref{def:generic_framework} or \ref{def:generic_property},
i.e.\ another variety $\acute{V}_Q$ is  chosen so that
$V_Q \subseteq \acute{V}_Q$, as in Lemma \ref{lem:generic}. 
Ideally, the variety $\acute{V}_Q$ corresponding to the chosen notion of genericity
should be as tight as possible for the property Q (necessary and
sufficient for Definition \ref{def:generic_framework} and \ref{def:generic_property}), 
and should be explicitly defined, or at
least easily testable for a given framework.

Once an appropriate notion of genericity is defined, 
we can treat $Q$  as a property of a hypergraph.
The primary activity of the area of combinatorial rigidity is to
  give purely combinatorial characterizations of such generic
properties $Q$.
In the process of drawing such combinatorial characterizations,
the notion of genericity  may have to be further restricted, i.e.\ the
variety $\acute{V}_Q$ is further expanded by so-called pure conditions that are
necessary for the combinatorial characterization to go through 
(we will see this   in the proof of Theorem \ref{thm:rigidity_condition}).

\subsection{Linearization as Rigidity Matrix and its Generic Combinatorics}
\label{sec:rigidity}

%As shown in the previous section, the Pinned Subspace-Incidence problem can be viewed as finding the common solutions of a system of polynomial equations (finding a real algebraic variety).
%We describe the approach taken by the traditional  rigidity theory \cite{asimow1978rigidity,graver1993combinatorial} 
%for characterizing generic properties of these solutions
%%\cite{graver1993combinatorial,white1987algebraic,white1983algebraic,whiteley1996some,whiteley1988union,haller2012body,streinu2010slider,jackson2008pin}
%and give some of the definitions.

%As introduced in Section \ref{sec:result},

%The notion of Affine Rigidity \cite{Gortler} is also related to our problem when viewed in a coordinate-free manner 
%(i.e.\ only relative positions are relevant). Affine rigidity asks, for a given set of points in $R^n$, when do measurements 
%of the relative affine positions of some subsets of the points 
%determine the positions of all the points, up to an overall affine transformation.

%
%\subsubsection{Linearization}
%\label{sec:linearization}

Next we follow the approach taken by  traditional combinatorial rigidity theory \cite{asimow1978rigidity,graver1993combinatorial} 
%for characterizing generic properties of the solutions of the system $\algesystem = 0$. 
to show that rigidity and independence (based on nonlinear polynomials) of pinned subspace-incidence systems 
are generically properties of the underlying hypergraph $\hsxd$,
and can furthermore be captured by linear conditions in an infinitesimal setting.
%Adapting \cite{asimow1978rigidity}, we show that
%rigidity and independence of pinned subspace-incidence systems 
%are generically properties of the underlying hypergraph $\hsxd$.
Specifically, Lemma~\ref{lem:generic} shows that
rigidity a pinned subspace-incidence system
is equivalent to existence of a full rank  \emph{rigidity matrix}, %of a generic framework,
obtained by taking the Jacobian of the algebraic system $\algesystem$ at a regular point.

%Under appropriate generic conditions, % of Lemma \ref{lem:generic}, 
%Adapting \cite{asimow1978rigidity},
%we now show that 
%rigidity and independence (based on nonlinear polynomials) of pinned pubspace-incidence systems 
%are generically properties of the underlying hypergraph $\hsxd$,
%and can furthermore be captured by linear conditions in an generic infinitesimal setting.
%as shown in Lemma \ref{lem:generic}. 
A \emph{rigidity matrix} of a framework $\framework$ is a matrix  whose kernel is the infinitesimal motions (flexes) of $\framework$. 
A framework is \emph{infinitesimally independent} if the rows of the rigidity matrix are independent.
%, i.e.\ the number of rows of the rigidity matrix is the same as its rank. 
A framework is \emph{infinitesimally rigid} if the space of infinitesimal motion is trivial, 
i.e.\ the rigidity matrix has full rank. 
A framework is \emph{infinitesimally minimally rigid} if it is both infinitesimally independent and rigid.
%We will show that infinitesimal rigidity is equivalent to rigidity for generic frameworks. 

To define a rigidity matrix for a pinned subspace-incidence framework $\framework$, 
we take the Jacobian $\jacobian$ of the algebraic system $\algesystem$, % \eqref{eq:system}
by taking partial derivatives w.r.t.\ the coordinates of $v_i$'s. 
In the Jacobian, each vertex $v_i$ has $d-1$  corresponding columns, 
and each pin $x_k$ has %$d-1 \choose s$ corresponding equations, thus 
$d-1 \choose s$ corresponding rows, \sloppy
where each equation $\det\left(E^k[\Cdot, C(t)]\right)=0$ \eqref{eq:constraint} gives the following row %in the Jacobian
(where  $x_k$ lies on the subspace spanned by  $S^k = \{v^k_1, v^k_2, \ldots, v^k_s \}$):

\fussy
\begin{align}
\label{eq:row}
[0,&\ldots,0,0, \frac{\partial \det\left(E^k[\Cdot, C(t)]\right)}{\partial v^k_{1,1}}, \frac{\partial \det\left(E^k[\Cdot, C(t)]\right)}{\partial v^k_{1,2}}, \ldots,\frac{\partial \det\left(E^k[\Cdot, C(t)]\right)}{\partial v^k_{1,d-1}}, 0,0, \notag \\ 
&\ldots,0,0,\frac{\partial \det\left(E^k[\Cdot, C(t)]\right)}{\partial v^k_{2,1}}, \frac{\partial \det\left(E^k[\Cdot, C(t)]\right)}{\partial v^k_{2,2}},  \ldots, \frac{\partial \det\left(E^k[\Cdot, C(t)]\right)}{\partial v^k_{2,d-1}}, 0,0,\ldots \notag \\ 
&\ldots \ldots \notag \\ 
&\ldots, 0,0, \frac{\partial \det\left(E^k[\Cdot, C(t)]\right)}{\partial v^k_{s,1}}, \frac{\partial \det\left(E^k[\Cdot, C(t)]\right)}{\partial v^k_{s,2}},  \ldots, \frac{\partial \det\left(E^k[\Cdot, C(t)]\right)}{\partial v^k_{s,d-1}}, 0,\ldots,0]
\end{align}

%Let $D^k = \{v^k_i, 1 \le i \le s\}$ be the vertices of the hyperedge corresponding to $x_k$.
For $j \in C(t)$, let $V^{k}_{i,j}(t)$ be the $(s-1)$-dimensional oriented volume, of the 
$(s-1)$-simplex formed by the vertices  %$\bar{P^k_i}=
$(S^k \setminus \{v^k_i\})$ together with  $x_k$, 
projected on the coordinates $C(t,j) = C(t)\setminus \{j\}$. 
%TODO if only we can simplify $C(t)\setminus \{j\}$ !!!
%For $j \notin C(t)$, let $V^k_{i,j}(t) = 0$. 
We define the following function  $\delta$ which adds the appropriate sign to $V^k_{i,j}(t)$:
\[
\delta\, V^k_{i,j}(t) = 
\begin{cases}
 (-1)^{q} \, V^k_{i,j}(t) & \text{if } j \in C(t), \text{where } q \text{ is the index of column } j \text{ in } C(t)\\
 0 & \text{if } j \notin C(t)
\end{cases}
\]
%For simplification, we use $\delta\, V^k_{i,j}(t)$  to denote $\delta (V^k_{i,j}(t))$.
Now \eqref{eq:row} is equal to
\begin{align*}
r^k(t) = 
[0,&\ldots,0,0, \delta\, V^k_{1,1}(t), \delta\, V^k_{1,2}(t), \ldots,\delta\, V^k_{1,d-1}(t), 0,0,\\
%&\ldots,0,0, V^k_{2,1}(t), V^k_{2,2}(t),  \ldots, V^k_{2,d-1}(t), 0,0,\ldots \\
&\ldots \ldots \\
&\ldots, 0,0, \delta\, V^k_{s,1}(t), \delta\, V^k_{s,2}(t),  \ldots, \delta\, V^k_{s,d-1}(t), 0,0,\ldots,0]
\end{align*}
Each vertex $v^k_{i}$ has the entries $\delta\,V^k_{i,1}(t), \delta\,V^k_{i,2}(t), \ldots, \delta\,V^k_{i,d-1}(t)$ in its $d-1$ columns, among which $s$ entries are generically non-zero.
%where $s$ of which are generically non-zero. 
%Let $D^k = \{v^k_i, 1 \le i \le s\}$ be the vertices of the hyperedge corresponding to $x_k$.
%The symbol $V^k_{i,j}(t)$, for $j \in R(t)$, 
%stands for the $(s-1)$-dimensional volume, of the 
%$(s-1)$-simplex formed by the vertices  %$\bar{P^k_i}=
%$(D^k \setminus \{v^k_i\})$ together with  $x_k$, %$\{x_k, p^k_1, p^k_2, \ldots, p^k_{i^*-1}, p^k_{i^*+1}, \ldots, p^k_s\}$, 
%projected on the coordinates $R(t)\setminus \{j\}$. 
%For a generic framework, we can choose a coordinate system such that these projected volumes are non-zero.
%All the other entries, including the terms $V^k_{i,j}(t)$ where $j \notin R(t)$, and 
%the entries corresponding to vertices not on the hyperedge $x_k$, 
%are zero.

%\begin{remark} 
%If we remove the columns with indices that can be divided by some $l \in R(t)$ from the row $R^k_l$, 
%the obtained row will have exactly the same pattern as the $s=d-1$ case.
%We can think of this modified row pattern as come from the original system
%projected on the coordinates in $R(t)$.
%%with the system being projected on the $s+1$ axis complement to $\{l_1, l_2, \ldots, l_{d-s-1}\}$.
%\end{remark}

%As in the $s = d-1$ case, 

Notice that for every pair of vertices $v^k_i$ and $v^k_{i'}$, % of same subspace
the projected volumes on different coordinates have the same ratio:
$\displaystyle\frac{V^k_{i,j_2}(t)}{ V^k_{i,j_1}(t)} = \frac{ V^k_{i',j_2}(t)}{ V^k_{i',j_1}(t)}$ for all $1 \le j_1, j_2 \le d-1, j_1 \in C(t), j_2 \in C(t)$.
% % % % % % % % % % % % % % %5
So we can divide each row $r^k(t)$ by  %$V^k_{j^*}(l) = 
$\sum_{i=1}^s V^k_{i,{j^*}}(t)$,
where $j^*$ is any index in $C(t)$,
% (choose l s.t. $V^k_{i,1}(l) \ne 0$: there are ?? $d-2 \choose s-1$ such rows)
and simplify $r^k(t)$  to 
%TODO explain what are the a's and b's ?
\begin{small}
\begin{align}
[ 0, &\ldots, 0,0, \delta\,b^k_{C(t,1)} a^k_1, \delta\,b^k_{C(t,2)} a^k_1 \ldots, \delta\,b^k_{C(t,d-1)} a^k_1, 0, 0,  \notag \\
&\ldots,   0, 0,  \delta\,b^k_{C(t,1)}a^k_2, \delta\,b^k_{C(t,2)}a^k_2, \ldots, \delta\,b^k_{C(t,d-1)} a^k_2, 0, 0,  \notag\\
 &\ldots \ldots, \notag \\
%&\ldots,   0, 0, \delta\,b^k_{C(t,1)} a^k_{s-1}, \delta\,b^k_{C(t,2)} a^k_{s-1}, \ldots, \delta\,b^k_{C(t,d-1)} a^k_{s-1}, 0, 0, \notag \\
&\ldots,0, 0,  \delta\,b^k_{C(t,1)} a^k_s, \delta\,b^k_{C(t,2)} a^k_s,\ldots, \delta\,b^k_{C(t,d-1)} a^k_s,  0, 0, \ldots,0 ]  \label{eq:row_pattern}
\end{align}
\end{small}
where  $\sum_{1 \le i \le s} a^k_i = 1$.
% % % % % % % % % % % % % %5
%Therefore we can rewrite $\delta\, V^k_{i,j}(t) =\delta\, b^k_{C(t,j)} a^k_{i,l}$,
%%where $\delta_{i,l(j)} = 0$ if $j \notin R(t)$, 
%%DIVIDE BY SUM?
%TODO this does capture dependence in 2D, but not all dependences in higher dimension?
%and rewrite $r^k(t)$ as 
%\begin{small}
%\begin{align}
%r^k(t) = [ 0, &\ldots, 0,0, \delta\, b^k_1 a^k_1(t), \delta\, b^k_2 a^k_1(t) \ldots, \delta\, b^k_{d-1} a^k_1(t) 0, 0,  \notag \\
%&\ldots,   0, 0, \delta\, b^k_1a^k_2(t), \delta\, b^k_2a^k_2(t), \ldots, \delta\, b^k_{d-1} a^k_2(t), 0, 0,  \notag\\
% &\ldots \ldots, \notag \\
%%&\ldots,   0, 0, b^k_1 a^k_{s-1}(t), b^k_2 a^k_{s-1}(t), \ldots, b^k_{d-1} a^k_{s-1}(t), 0, 0, \notag \\
%&\ldots,   0, 0, \delta\,b^k_1 a^k_{s}(t), \delta\, b^k_2 a^k_{s}(t), \ldots, \delta\, b^k_{d-1} a^k_{s}(t), \;0, 0, \ldots, 0 ] 
%%&\ldots,0, 0,  b^k_1 \left(1- \sum_{i=1}^{s-1} a_i\right), b^k_2 \left(1- \sum_{i=1}^{s-1} a_i\right),\notag\\ 
%%&\ldots, b^k_{d-1} \left(1- \sum_{i=1}^{s-1} a_i\right),  0, 0, \ldots,0 ]  
%\label{eq:row_pattern}
%\end{align}
%\end{small}
% % % % % % % % % % % % % % % % % % % % % % % % % %

\begin{figure}[hbtp]
\begin{center}
\includegraphics[width=.25\linewidth]{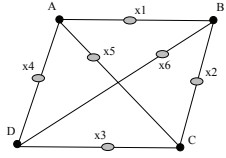}
\end{center}
\caption{An pinned subspace-incidence framework  of $6$ pins and $4$ vertices, with $d=4,s=2$.}
\label{fig:tetrahedron}
\end{figure}

\begin{example} \label{example:k4}
Figure \ref{fig:tetrahedron} shows a pinned subspace-incidence framework with $d = 4, s=2$.
If we denote
$\alpha_{i,j} = A_j - x_{i,j}, \beta_{i,j} = B_j - x_{i,j}$, 
%$\alpha_{1,1} = A_1 - x_{1,1}, \alpha_{1,2} = A_2 - x_{1,2}, \beta_{1,1} = B_1 - x_{1,1}, \beta_{1,2} = B_2 - x_{1,2}$, etc, 
the edge $AB$ will have the following three rows in the Jacobian: 
\[
\left[
\scalemath{0.75}{
\begin{array}{cccccccccccc}
\beta_{1,2}& -\beta_{1,1} & 0 & -\alpha_{1,2} & \alpha_{1,1} & 0 &0&0&0&0&0&0\\
\beta_{1,3}& 0 & -\beta_{1,1} & -\alpha_{1,3}& 0 & \alpha_{1,1} &0&0&0&0&0&0\\
0 & \beta_{1,3} & -\beta_{1,2} & 0&-\alpha_{1,3}& \alpha_{1,2} &0&0&0&0&0&0
\end{array} }
\right]
\]
%\[
%\left[
%\scalemath{0.75}{
%\begin{array}{cccccccccccc}
%\beta\alpha_{1,2}& -\beta\alpha_{1,1} & 0 & -\alpha_{1,2} & \alpha_{1,1} & 0 &0&0&0&0&0&0\\
%\beta\alpha_{1,3}& 0 & -\beta\alpha_{1,1} & -\alpha_{1,3}& 0 & \alpha_{1,1} &0&0&0&0&0&0\\
%0 & \beta\alpha_{1,3} & -\beta\alpha_{1,2} & 0&-\alpha_{1,3}& \alpha_{1,2} &0&0&0&0&0&0
%\end{array} }
%\right]
%\]
and the corresponding rows in the simplified Jacobian has the following form
\[
\left[
\scalemath{0.75}{
\begin{array}{cccccccccccc}
 b_2 a_1 &  -b_1 a_1 & 0 &  -b_2 a_2 &  b_1 a_2 & 0 &0&0&0&0&0&0\\
 b_3 a_1 & 0 &  -b_1 a_1 &  -b_3 a_2 & 0 &  b_1 a_2 &0&0&0&0&0&0\\
 0 & b_3 a_1 &  -b_2 a_1 &  0& -b_3 a_2 &   b_2 a_2 &0&0&0&0&0&0
\end{array} }
\right]
\] 
% C(1) = {1,2}; C(2) = {1,3}; C(3) = {2,3}
%\[
%\left[
%\scalemath{0.75}{
%\begin{array}{cccccccccccc}
% b_{1,1} a_1 &  b_{1,2} a_1 & 0 &  b_{1,1} (1-a_1) &  b_{1,2} (1-a_1) & 0 &0&0&0&0&0&0\\
% b_{2,1} a_2 & 0 &  b_{2,2} a_2 &  b_{2,1} (1-a_2) & 0 &  b_{2,2} (1-a_2) &0&0&0&0&0&0\\
% 0 & b_{3,1}a_3 &  b_{3,2} a_3 &  0& b_{3,1} (1-a_3) &   b_{3,2} (1-a_3) &0&0&0&0&0&0
%\end{array} }
%\right]
%\] 
\end{example}

For a pinned subspace-incidence framework $\framework$, 
we define the \emph{symmetric rigidity matrix $M$}
to be the simplified Jacobian matrix obtained above, of size $m{d-1 \choose s}$ by $n(d-1)$,
where each row has the form \eqref{eq:row_pattern}.
Notice that in $M$ each hyperedge has $d-1 \choose s$ rows, 
where any $d-s$ of them are independent, and spans all other rows. 
%TODO when there are multiple pins, more rows are dependent?
If we choose  $d-s$ rows per hyperedge in $M$, %these rows will be indepenent, and
the obtained matrix $\hat{M}$ is a rigidity matrix of size $m(d-s)$ by $n(d-1)$.
The framework is infinitesimally rigid if and only if there is an $\hat{M}$ with full rank.
Note that the rank of a generic matrix $\hat{M}$
is at least as large as the rank of any specific realization $\hat{M}\framework$.
%The framework is infinitesimally rigid if and only if $M$ has full column rank.
%Note that the rank of a generic matrix $M$
%is at least as large as the rank of any specific realization $M\framework$.

\begin{remark}
There are several correct ways to write the rigidity matrix of a framework, 
depending on what one considers as the primary indeterminates (points, subspaces, or both),
i.e.\ whether one chooses to work in primal or dual space. 
We pick points for columns %and work in primal space 
for the simplicity of the row pattern.
\end{remark}

Defining generic as non-singular, % at a regular point, 
we show that for a generic framework $\framework$,
infinitesimal rigidity is equivalent to generic rigidity. 
%We adapt this connection for our problem as follows.
%
\sloppy
\begin{lemma}
\label{lem:generic}
If $D$ and $X$ are regular / non-singular with respect to the system $\algesystem$, 
then generic infinitesimal rigidity of the framework $\framework$ is equivalent to generic rigidity.
\end{lemma}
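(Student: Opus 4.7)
The plan is to adapt the classical Asimow--Roth argument \cite{asimow1978rigidity} to our pinned subspace-incidence setting. Fixing the pins $X$, the solution set $V_X = \{D' \in \mathbb{R}^{n(d-1)} : \algesystem[D'] = 0\}$ is a real algebraic variety in configuration space. Rigidity of a framework $\framework$ is exactly the statement that $D$ is an isolated point of $V_X$, while infinitesimal rigidity is the statement that the tangent ``motion space'' captured by $\ker(\hat{M})$ is trivial. The bridge between these two notions is the local structure of $V_X$ at the configuration $D$, controlled by the Jacobian $\jacobian$.

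First I would invoke the constant rank theorem. Because $D$ and $X$ are regular with respect to $\algesystem$, the Jacobian $\jacobian$ has locally constant rank on some neighborhood $N(D)$, so $V_X \cap N(D)$ is a smooth submanifold of dimension $n(d-1) - \mathrm{rank}(\jacobian)$, and its tangent space at $D$ is precisely $\ker(\jacobian)$. Since $\hat{M}$ is obtained from $\jacobian$ by discarding, for each pin, the $\binom{d-1}{s} - (d-s)$ dependent rows (and rescaling), we have $\ker(\hat{M}) = \ker(\jacobian)$. The framework is therefore infinitesimally rigid iff $\mathrm{rank}(\hat{M}) = n(d-1)$, iff $V_X \cap N(D)$ is zero-dimensional, iff $D$ is isolated in $V_X$, iff the framework is rigid. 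This gives the equivalence at every regular framework.

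To promote this pointwise equivalence to the generic statement, I would use that matrix rank is lower semicontinuous: the locus where $\hat{M}$ attains its maximum rank is Zariski-open, and intersecting with the Zariski-open set of regular points of $\algesystem$ still yields an open dense set of frameworks. On this intersection the notions of infinitesimal rigidity and finite rigidity coincide framework-by-framework, and since both conditions are invariant under moving $D$ within this open set, they depend only on the combinatorial type of $H$, matching Definition~\ref{def:generic_property}. The main obstacle I anticipate is handling the ``regular'' hypothesis cleanly: one must verify that the row-simplification producing $\hat{M}$ from the raw Jacobian (dividing each row by $\sum_i V^k_{i,j^*}(t)$) is valid, i.e.\ that the denominator does not vanish at $D$, and identify the variety of non-regular points as the zero set of an explicit polynomial so that it can later be absorbed into the pure condition used in Theorem~\ref{thm:rigidity_condition}. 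Once the regularity set is pinned down as the complement of such a variety, the equivalence of the two generic rigidity notions reduces to the constant rank theorem applied above.
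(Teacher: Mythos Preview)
Your proposal is correct and follows essentially the same Asimow--Roth strategy as the paper: both argue that at a regular point the Jacobian rank controls whether $D$ is isolated in the solution variety $V_X$. The paper packages the two directions as two separate invocations of the Implicit Function Theorem (full rank $\Rightarrow$ isolated solution; rank-deficient $\Rightarrow$ an explicit continuous flex built from a maximal-rank square submatrix), whereas you unify them via the constant rank theorem and a single dimension count, but the underlying content is identical.
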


\fussy
%Proof of Lemma \ref{lem:generic} is in Appendix \ref{app:lem:generic}.
%\begin{proof} [Proof Sketch]
{\em Proof Sketch}. 
%TODO need to be regular to show this direction?
First we show that if a framework is regular, infinitesimal rigidity implies rigidity.
Consider the polynomial system $\algesystem$ of equations. % \eqref{eq:system}, . 
The Implicit Function Theorem states that 
there exists a function $g$, such that $D=g(X)$ on some open interval, if and only if the Jacobian $\jacobian$ of $\algesystem$ with respect to $D$ has full rank. 
Therefore, if the framework is infinitesimally rigid, then the solutions to the algebraic system are isolated points (otherwise $g$ could not be explicit). 
Since the algebraic system contains finitely many components, there are only finitely many such solution 
and each solution is a $0$ dimensional point. This implies that the total number of solutions is finite,
 which is the definition of rigidity.

To show that generic rigidity implies generic infinitesimal rigidity, we take the contrapositive: if a generic framework is not infinitesimally rigid,
we show that there is a finite flex. 
Let $\hat{M}$ be the $m(d-s)$ by $n(d-1)$ rigidity matrix obtained from the Jacobian $\jacobian$ which has the maximum rank. 
If $\framework$ is not infinitesimally rigid,
then the rank $r$ of $\hat{M}$ is less than $n(d-1)$. %the Jacobian $\jacobian$ is less than $2m$. 
Let $E^*$ be a set of edges in $H$ such that $|E^*|=r$ and the corresponding rows in the Jacobian $\jacobian$ are all independent.
In $\hat{M}[E^*, \Cdot]$, we can find $r$ independent columns. 
%There are $r$ independent rows as well. 
Let $D^*$ be the components of $D$ corresponding to those $r$ independent columns and $D^{*\perp}$ be the remaining components.
The $r$-by-$r$ submatrix $\hat{M}[E^*, D^*]$, made up of the corresponding independent rows and columns, is invertible. 
Then, by the Implicit Function Theorem, in a neighborhood of $D$ there exists a continuous and differentiable function $g$ such that $D^*=g(D^{*\perp})$.
This identifies $D'$, whose components are $D^*$ and the level set of $g$ corresponding to $D^*$, such that $(H,X)(D')=0$. The level set
defines the finite flexing of the framework. Therefore the system is not rigid.
\qquad\endproof
%\end{proof}

\begin{remark}
Pinned subspace-incidence frameworks are generalizations of related types of frameworks, 
such as in pin-collinear body-pin frameworks~\cite{jackson2008pin}, 
direction networks~\cite{whiteley1996some},
slider-pinning rigidity~\cite{streinu2010slider},
the molecular conjecture in 2D~\cite{servatius2006molecular},
body-cad constraint system~\cite{haller2012body},
$k$-frames~\cite{white1987algebraic,white1983algebraic}, 
and affine rigidity~\cite{gortler2013affine}.
\end{remark}

\subsection{Required Hypergraph Properties}

\sloppy
This section introduces a pure hypergraph property that will be useful for proving our main theorem.

\fussy
\begin{definition} \label{def:tightness}
A hypergraph $H=(V,E)$ is $(k,0)$-sparse if for any ${V'} \subset V$, 
the induced subgraph ${H'}=({V'},{E'})$ satisfies $|{E'}| \leq k|{V'}|$. 
A hypergraph $H$ is $(k,0)$-tight if $H$ is $(k,0)$-sparse and $|E| = k|V|$.
 %$\acute{G}$ is the vertex induced subgraph augmented with a self-loop at $v_i$ 
%       when there are two edges between the same vertex $v_j \in V - \acute{V}$ to the same vertex $v_i \in \acute{V}$.
\end{definition}

This is a special case of the $(k,l)$-sparsity condition 
that was formally studied widely in the geometric constraint solving and combinatorial rigidity literature 
before it was given a name in \cite{lee2007graded}.
%studied in \cite{lee2007graded,streinu2009sparse,whiteley1988union,whiteley1996some,tutte1961problem,nash1961edge}.
A relevant concept from graph matroids is \emph{map-graph}, defined as follows.

\begin{definition}
An \emph{orientation} of a hypergraph is given by identifying as the \emph{tail} of each edge one of its endpoints.
The \emph{out-degree} of a vertex is the number of edges which identify it as the tail and connect $v$ to $V - v$.
A \emph{map-graph} is a hypergraph that admits an orientation such that the out degree of every vertex is
exactly one. 
\end{definition}

The following lemma from \cite{streinu2009sparse} follows Tutte-Nash Williams \cite{tutte1961problem,nash1961edge}
to give a useful characterization of $(k,0)$-tight graphs in terms of maps.

\begin{lemma} 
\label{lem:map_decomposition}
A hypergraph $H$ is composed of $k$ edge-disjoint map-graphs if and only if $H$ is $(k,0)$-tight.
\end{lemma}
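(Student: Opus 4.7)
My plan is to prove the two directions of the equivalence separately, with the harder direction relying on a hypergraph orientation theorem followed by a bipartite matching argument.

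For the easy direction ($\Leftarrow$), suppose $H$ decomposes into $k$ edge-disjoint map-graphs $H_1, \ldots, H_k$ sharing the same vertex set $V$. Since each map-graph has out-degree exactly one at every vertex, counting edges via tails gives $|E(H_i)| = |V|$, so $|E| = k|V|$. For any induced subhypergraph $H' = (V', E')$, I would use the given orientations of the $H_i$ and observe that every edge $e \in E' \cap E(H_i)$ must have its tail inside $V'$ (since $e \subseteq V'$). Because each vertex of $V'$ is the tail of at most one edge of $H_i$, this gives $|E' \cap E(H_i)| \le |V'|$, and summing over $i$ yields $|E'| \le k|V'|$. Together with $|E| = k|V|$, this is exactly $(k,0)$-tightness.

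For the hard direction ($\Rightarrow$), I would proceed in two steps. First, invoke a Hakimi/Frank-style orientation theorem for hypergraphs: $H$ admits an orientation (one tail per edge) with maximum out-degree at most $k$ if and only if $|E(V')| \le k|V'|$ for every $V' \subseteq V$, where $E(V')$ denotes the edges contained in $V'$. Since $H$ is $(k,0)$-sparse, such an orientation exists. Since the total out-degree equals $|E| = k|V|$, every vertex must in fact have out-degree exactly $k$.

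The second step partitions this orientation into $k$ map-graphs via bipartite matching. Construct a bipartite graph $B$ with parts $E$ and $V \times [k]$, placing an edge from $e \in E$ to $(v,i)$ for every $i \in [k]$ whenever $v$ is the tail of $e$. Then each $e \in E$ has degree exactly $k$ (its unique tail paired with all $k$ labels), and each $(v,i)$ has degree equal to the out-degree of $v$, which is also $k$. So $B$ is $k$-regular bipartite and, by König's theorem, admits a perfect matching, which splits $E$ into classes $E_1, \ldots, E_k$ with the property that in each $E_i$ every vertex is the tail of exactly one edge. Equipped with the restricted orientation, each $(V, E_i)$ is thus a map-graph, and the $E_i$ are edge-disjoint by construction.

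The main obstacle is Step one: the hypergraph orientation theorem. For ordinary graphs this is classical (Hakimi), and the generalization to hypergraphs follows from polymatroid/submodular flow arguments; I would either cite \cite{streinu2009sparse} (where this lemma is stated in the same form) or sketch the submodular argument via matroid union, observing that the matroid of pseudoforests on $H$ has $(k,0)$-tight graphs as the bases of its $k$-fold union.
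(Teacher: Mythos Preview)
The paper does not actually prove this lemma: it is quoted from \cite{streinu2009sparse} and attributed to the Tutte--Nash-Williams theorem \cite{tutte1961problem,nash1961edge}, with no argument given. Your proposal therefore supplies strictly more than the paper does.

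Your argument is essentially correct. One cosmetic slip: you label the ``decomposition $\Rightarrow$ tight'' direction as $(\Leftarrow)$, but in the statement ``$H$ decomposes iff $H$ is $(k,0)$-tight'' that is the \emph{only if} ($\Rightarrow$) direction. The mathematics in that paragraph is fine: tails of edges of $E'$ must lie in $V'$, and each vertex of $V'$ is a tail at most once per map, giving $|E'|\le k|V'|$.

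For the substantive direction, your two-step plan (Hakimi-type orientation to get out-degree exactly $k$, then a K\"onig matching on the $k$-regular bipartite graph between $E$ and $V\times[k]$) is clean and correct; the regularity check is exactly as you describe. This route is a genuine alternative to the Tutte--Nash-Williams / matroid-union argument the paper alludes to: matroid union shows directly that $(k,0)$-tightness is equivalent to being a base of the $k$-fold union of the map-graph (pseudoforest) matroid, hence a disjoint union of $k$ bases. Your orientation-plus-matching argument is more elementary and avoids matroid machinery, at the cost of invoking the hypergraph orientation theorem as a black box. Either citing \cite{streinu2009sparse} for that black box (as you propose) or giving the short Hall/deficiency proof of the orientation statement would make your argument fully self-contained.
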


\subsection{Proof of Main Theorem and Corollaries}
\label{sec:proof}

We are now ready to prove Theorem~\ref{thm:rigidity_condition}, 
a combinatorial characterization of the existence of finitely many solutions 
for a pinned subspace incidence framework.
The proof adopts an approach by  \cite{white1983algebraic,white1987algebraic}, in proving rigidity of $k$-frames, 
with the following outline:
\begin{itemize}
% \item We show that, for our system, infinitesimal rigidity is equivalent rigidity, at a regular point (Lemma \ref{lem:generic}).
 %\item We obtain a simple form for the Rigidity matrix (Lemma \ref{lemma:simplerows}) and show that this matrix is equivalent to the Jacobian of the algebraic function in \ref{eq:multifunction}.
 \item  We obtain an \emph{expanded mutli-hypergraph} of $\hsxd$ by replacing each hyperedge with $(d-s)$ copies, 
 %  Theorem \ref{thm:rigidity_condition} to an equivalent Theorem \ref{thm:rigidity_condition_duplicate} on an \emph{expanded underlying mutli-hypergraph}, 
  in order to apply the $(k,0)$-tightness condition.

 \item We show that for a specific form of the rows of a matrix defined on a map-graph, the determinant is not identically zero (Lemma \ref{lem:map_det}). 
 \item We apply Laplace decomposition to the $(d-1,0)$-tight hypergraph as a union of $d-1$ maps, 
  to show that the determinant of the rigidity matrix is not identically zero, as long as a certain polynomial is avoided by the framework (Proof of Main Theorem).
 \item The resulting polynomial is called the \emph{pure condition}
 %: the relationship  that the framework has to satisfy in order for the combinatorial characterization to hold.  
 % In other words, the pure condition  
 which characterizes the badly behaved cases
  (i.e.\ the conditions of non-genericity that the framework has to avoid in order for the combinatorial characterization to hold).  
\end{itemize}

%\begin{theorem}[Main Theorem] \label{thm:rigidity_condition}
%The Pinned Subspace-Incidence problem is generically minimally rigid 
%(i.e.\ admits a finite number of solutions)
%if and only if the underlying hypergraph $\hsxd = (\cald,\calsxd)$ satisfies $(d-s)\Arrowvert\calsxd\Arrowvert = (d-1)|\cald|$ (i.e. $(d-s)|X| = (d-1)|D|$), and
%$(d-s)|E'| \le (d-1)|V'|$ for every vertex induced subgraph $H'=(V',E')$.
%The latter condition alone ensures the existence of a solution.
%\end{theorem}

First notice that 
the graph property from Theorem \ref{thm:rigidity_condition} is not directly a $(k,0)$-tightness condition, 
%In order to use the $(k,0)$-tightness characterization,
so we modify the underlying hypergraph by duplicating each hyperedge into $(d-s)$ copies.

\begin{definition}[Expanded mutli-hypergraph]
Given the underlying hypergraph $H = (\cald,\calsxd)$ of a Pinned Subspace-Incidence problem, 
the \emph{expanded mutli-hypergraph} $\hat{H}=(V,\hat{E})$ of $H$ is obtained by
letting $V = \cald$, and
replacing each hyperedge in $\calsxd$ with $(d-s)$ copies in $\hat{E}$. 
%$\hat{E} = E_1 \cup E_2 \cup \ldots E_{d-s}$.
\end{definition}

A rigidity matrix $\hat{M}$ defined in last section for a pinned subspace-incidence framework 
has one row for each hyperedge copy in the expanded multi-hypergraph $\hat{H}=(V,\hat{E})$. 
%is a $(d-s)|\hat{E}|$ by $(d-1)|V|$ matrix according to the expanded mutli-hypergraph $\hat{H}=(V,\hat{E})$,
%where each hyperedge $x_k \in \calsxd$ has $(d-s)$ rows, one for each copy. 
%The $(d-s)$ rows % for all copies of one edge $x_k$ is obtained by picking 
%are arbitrarily picked from the $d-1 \choose s$ rows of $x_k$ in the symmetric rigidity matrix $M$. 

%So Theorem \ref{thm:rigidity_condition} is equivalent to:
%So we can prove the following equivalent form of Theorem \ref{thm:rigidity_condition} on the duplicated hypergraph instead:
 Theorem \ref{thm:rigidity_condition} can be restated on the expanded mutli-hypergraph: % in the following form:

\begin{theorem} \label{thm:rigidity_condition_duplicate}
A pinned subspace-incidence framework is generically minimally rigid if and only if the 
underlying expanded mutli-hypergraph is $(d-1,0)$-tight.
\end{theorem}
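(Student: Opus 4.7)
The plan is to characterize minimal rigidity by analyzing the rigidity matrix $\hat{M}$, which by Lemma \ref{lem:generic} captures algebraic rigidity and independence of $\framework$ generically. Recall that $\hat{M}$ has size $|\hat{E}| \times (d-1)|V|$, with each row corresponding to one (copy of a) hyperedge in the expanded multi-hypergraph $\hat{H} = (V,\hat{E})$ and having the sparse structure \eqref{eq:row_pattern}: non-zero entries appear only in the $d-1$ columns associated with each of the $s$ vertices incident to that hyperedge.

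For the necessity direction, I would argue as follows. Generic minimal rigidity forces $\hat{M}$ to have maximal row and column rank, i.e.\ to be square and invertible; this gives $|\hat{E}| = (d-1)|V|$. For any vertex-induced subgraph $\hat{H}' = (V',\hat{E}')$, the row pattern \eqref{eq:row_pattern} shows that the rows of $\hat{M}$ indexed by $\hat{E}'$ have support contained in the $(d-1)|V'|$ columns indexed by $V'$, so generic independence forces $|\hat{E}'| \leq (d-1)|V'|$. Together these yield $(d-1,0)$-tightness of $\hat{H}$.

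For the sufficiency direction, assume $\hat{H}$ is $(d-1,0)$-tight. By Lemma \ref{lem:map_decomposition}, $\hat{H}$ decomposes into $d-1$ edge-disjoint map-graphs $M_1,\ldots,M_{d-1}$, each with exactly $|V|$ edges. Partition the $(d-1)|V|$ columns of $\hat{M}$ into blocks $B_1,\ldots,B_{d-1}$, where $B_\ell$ collects the $\ell$-th coordinate column of every vertex. Apply generalized Laplace expansion of $\det(\hat{M})$ along this column partition: the expansion is a signed sum over all ways to split the row set into blocks of size $|V|$ and assign each block to some $B_\ell$, of the product of the resulting $(|V|\times|V|)$ subdeterminants. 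The map decomposition supplies a canonical such assignment, namely rows of $M_\ell$ to $B_\ell$. Via the intended Lemma \ref{lem:map_det}, I would show each corresponding $(|V|\times|V|)$ subdeterminant is not identically zero as a polynomial: each map's tail-orientation produces a bijection between edges (rows) and vertices (columns within $B_\ell$), so the diagonal term of the subdeterminant is a product of entries of the form $\delta\,b^k_{\ell}\,a^k_{i(e)}$, and the map structure ensures this monomial does not cancel within the $(|V|\times|V|)$ expansion.

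The main obstacle is ensuring that the full Laplace expansion of $\det(\hat{M})$ does not collapse through cancellation across distinct partition terms: even with each map-based minor non-vanishing, different partition terms could in principle cancel. To address this, the key step is to exhibit a single monomial, coming from the map-induced Laplace term, whose variables cannot be reproduced by any competing partition of the rows; the tail orientation of each map, combined with the fact that each row has non-zero entries in only $s < d-1$ vertex blocks, should force such disjointness. The vanishing locus of the resulting non-identically-zero polynomial $\det(\hat{M})$ then constitutes the \emph{pure condition}, i.e.\ the algebraic variety outside of which frameworks are generic; for such generic frameworks $\hat{M}$ has full rank, establishing generic minimal rigidity and completing the characterization.
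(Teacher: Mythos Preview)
Your overall strategy matches the paper's: partition columns by coordinate, decompose $\hat{H}$ into $d-1$ map-graphs via Lemma~\ref{lem:map_decomposition}, and use a Laplace expansion together with Lemma~\ref{lem:map_det} to force $\det(\hat{M})\not\equiv 0$. The necessity direction is fine and essentially identical to the paper's.

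There is, however, a genuine gap in your sufficiency argument. You assert that, for the map-induced row partition, each $|V|\times|V|$ minor $\hat{M}[R_\ell,B_\ell]$ has the shape required by Lemma~\ref{lem:map_det} and in particular has a non-vanishing diagonal term $\prod_e \delta\,b^k_\ell a^k_{i(e)}$. But look again at the row pattern \eqref{eq:row_pattern}: a row corresponding to the column set $C(t)$ has $\delta\,b^k_{C(t,j)}=0$ whenever $j\notin C(t)$, so each row is supported on only $s$ of the $d-1$ coordinate blocks. Thus, for a fixed $\hat{M}$, it is entirely possible that some edge assigned to map $M_\ell$ sits in a row whose $\ell$-th coordinate block is identically zero; the corresponding minor $\hat{M}[R_\ell,B_\ell]$ then has an all-zero row and vanishes, and Lemma~\ref{lem:map_det} says nothing.

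The paper closes this gap with an extra step you omit (Claim~\ref{claim:no_zero_row}): $\hat{M}$ is not fixed in advance, but is chosen \emph{after} the map decomposition. For each hyperedge one may select any $d-s$ of the $\binom{d-1}{s}$ rows from the symmetric matrix $M$, and a counting argument (using $\binom{d-2}{s-1}\ge d-s$) shows these choices can be made so that every edge copy assigned to map $M_\ell$ lands in a row with non-zero $\ell$-th coordinate block. Only then does Lemma~\ref{lem:map_det} apply. Your proposal treats $\hat{M}$ as given and never invokes this freedom, so as written the key Laplace term could be zero. Separately, the non-cancellation across Laplace terms is handled in the paper by observing that each partition $\sigma$ carries a distinct multilinear factor in the $b$-variables (pulled out of each minor), which is more concrete than the disjoint-monomial heuristic you sketch.
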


%\begin{example}
%Figure \ref{fig:tetrahedron} gives an example of a framework whose expanded mutli-hypergraph is $(3,0)$-tight.
%\end {example}

Since Theorem \ref{thm:rigidity_condition_duplicate} is equivalent to Theorem \ref{thm:rigidity_condition},
we only need to prove Theorem \ref{thm:rigidity_condition_duplicate} in the following.
%
%
%The proof %follows the tradition of rigidity theory.  %\cite{white1983algebraic,white1987algebraic,graver1993combinatorial,streinu2010slider,jackson2008pin}.
%%\cite{WW1987,WW1983, W1, W2, Sitharam1, Streinu, Streinu2, Servatius, JacksonJordan}. 
%adopts an approach by  \cite{white1983algebraic,white1987algebraic}, in proving rigidity of $k$-frames.
%%
%
We first consider the generic rank of particular matrices defined on a single map-graph. 
\begin{lemma} \label{lem:map_det}
A matrix $N$ defined on a map-graph $H=(V,E)$, such that columns are indexed by the vertices and rows by the edges, 
where the row for hyperedge $x_k \in E$ has non-zero entries only at the $s$ indices corresponding to $v^k_i \in x_k$, 
%with the following pattern: 
\begin{equation} \label{eq:map_row_pattern}
\scalemath{0.88}{
[0,\ldots, 0, a_1^k, 0, \ldots, a_2^k, 0, \ldots \ldots, 0, a^k_{s-1}, 0, \ldots,  a^k_s, 0, \ldots, 0]
}
\end{equation}
is generically full rank.
\end{lemma}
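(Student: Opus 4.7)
The plan is to use the characterization of map-graphs (Lemma~\ref{lem:map_decomposition} and the orientation definition) to exhibit a single nonvanishing monomial in the Leibniz expansion of $\det N$. Since $H$ is a map-graph, $|E| = |V|$ (every vertex is the tail of exactly one edge), so $N$ is square and ``generically full rank'' is equivalent to $\det N \not\equiv 0$ as a polynomial in the indeterminates $\{a^k_i : k \in E,\ 1 \le i \le s\}$.

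First, I would fix an orientation of $H$ witnessing the map-graph property, giving a bijection $\tau : V \to E$ such that $v$ is the tail of $\tau(v)$, i.e.\ $v \in \tau(v)$. Its inverse $\sigma_0 = \tau^{-1} : E \to V$ assigns to each edge $e = x_k$ a vertex $v^k_{i_0(k)}$ lying in $e$; thus $N[e, \sigma_0(e)] = a^k_{i_0(k)} \ne 0$ for every $e$. Hence in the Leibniz expansion
\[
\det N \;=\; \sum_{\sigma \in S_{|V|}} \mathrm{sgn}(\sigma) \prod_{e \in E} N[e, \sigma(e)],
\]
the term for $\sigma_0$ contributes the nonzero monomial $\pm \prod_{k \in E} a^k_{i_0(k)}$.

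The main step is ruling out cancellation. Only permutations $\sigma$ with $\sigma(e) \in e$ for every edge $e$ contribute nonzero terms, and each such $\sigma$ contributes $\pm \prod_{k} a^k_{\pi_\sigma(k)}$, where $\pi_\sigma(k) \in \{1,\ldots,s\}$ is the position of $\sigma(x_k)$ among the $s$ vertices of edge $x_k$. I claim that the map $\sigma \mapsto \prod_{k} a^k_{\pi_\sigma(k)}$ is injective on the set of admissible permutations. Indeed, if $\sigma_1$ and $\sigma_2$ produce identical monomials, then $\pi_{\sigma_1}(k) = \pi_{\sigma_2}(k)$ for every $k$, because the variables $\{a^k_i\}_{k,i}$ are algebraically independent indeterminates and each row draws exactly one factor from its own private block of variables. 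Since the $s$ vertices of $x_k$ occupy distinct positions, $\pi_{\sigma_1}(k) = \pi_{\sigma_2}(k)$ forces $\sigma_1(x_k) = \sigma_2(x_k)$ for all $k$, so $\sigma_1 = \sigma_2$.

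Because distinct admissible permutations yield distinct monomials, no cancellation occurs among them, and in particular the monomial coming from $\sigma_0$ survives in $\det N$ with coefficient $\pm 1$. Therefore $\det N$ is a nonzero polynomial in the $a^k_i$, and $N$ has full rank generically. The only real obstacle in the proof is the monomial-distinctness argument; everything else is bookkeeping, and the key conceptual input is precisely that the map-graph orientation produces an admissible permutation to witness non-vanishing.
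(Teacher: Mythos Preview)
Your proof is correct and follows essentially the same approach as the paper: both use the map-graph orientation to produce a bijection $E \to V$ picking out one admissible term in the Leibniz expansion, and then argue that this term survives. The only difference is in the last step: the paper shows survival by specializing $a^k_j \mapsto 0$ for $j \ne s$ and $a^k_s \mapsto 1$ (so that every other term vanishes and the determinant evaluates to $\pm 1$), whereas you argue directly that distinct admissible permutations yield distinct monomials because each row contributes its own private block of indeterminates. Your distinctness argument is a bit sharper (it shows the determinant is a sum of distinct monomials with $\pm 1$ coefficients, not just nonzero), but the two arguments are really the same idea viewed from two angles.
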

%
%The proof of Lemma \ref{lem:map_det} is given in Appendix \ref{app:lem_map_det}.
\begin{proof}
According to the definition of a map-graph, 
the function $\tau:E \rightarrow V$ assigning a tail vertex to each hyperedge  is a one-to-one correspondence.
Without loss of generality, assume that for any $x_k$, 
the corresponding  entry of $\tau(x_k)$  in $N$ is $a^k_s$ 
(notice that we can arbitrarily switch the variable names  $a^k_1, \ldots, a^k_{s-1}, a^k_s$).
The determinant of the matrix $N$ is: 
\begin{equation} \label{eq:map_purecondition}
%det(N) = \pm \prod_k (1 - \sum a_i^k) 
%\pm \left( \sum_{\substack{\text{nonempty subsets A, B}\\\text{ of cols and rows }}} 
%\left(\prod_{m \in A, n \in B} a_m^n \prod_{\bar{m} \notin A, \bar{n} \notin B} (1- \sum a_{\bar{m}}^{\bar{n}}) \right)\right)
\det(N) = \pm \prod_k a_s^k +  \sum_{\sigma} \text{sgn}(\sigma) \prod_{i=1}^n N_{i,\sigma_i}
\end{equation}
where $\sigma$ enumerates all permutations of $\{1,2, \ldots, n\}$ except for the one corresponding to the first term $\pm \prod_k a_s^k$. 

Notice that each term $\prod_{i=1}^n N_{i,\sigma_i}$ has at least one $a^k_j, j<s$ as a factor. 
If we use the %specialize the indeterminates by 
specialization with $a_j^k = 0$ for all $j < s$ and $a_s^k = 1$, 
the summation over $\sigma$  will be zero, 
 and $\det (N)$ will be $\pm \prod_k a_s^k = \pm 1$.
So generically, $N$ must have full rank.
\qquad
\end{proof}

Now we are ready to prove the main theorem 
by  decomposing the expanded mutli-hypergraph as a union of $d-1$ maps, 
and applying Lemma \ref{lem:map_det}. 
%  to show that the determinant of the rigidity matrix is not identically zero. (Proof of Main Theorem).
%

%\begin{proof}[Proof of Main Theorem]
{\em Proof of Main Theorem}. 
%The characterization follows from proving that the polynomial from the determinant of $M$ is not identically zero.
%\noindent$(\Longrightarrow)$:
First we show the only if direction. For a generically minimally rigid pinned subspace-incidence framework, the determinant of $\hat{M}$ is not identically zero.
Since the number of columns is $n(d-1)$, it is trivial that $n(d-1)$ copied edges in $\hat{M}$, 
namely $n \frac{d-1}{d-s}$ pins, are necessary. 
It is also trivial to see that $(d-1,0)$-tightness
is necessary, 
since any subgraph ${H'}=({V'},{E'})$ of $\hat{H}$ %exceeds
%its count, 
with $|{E'}| > (d-1)|{V'}|$ is overdetermined and generically has no solution.
%its vertex compliment will be under-determined. 

Next we show the if direction, that $n(d-1)$ edge copies arranged generically in a $(d-1,0)$-tight pattern in the expanded multi-hypergraph imply infinitesimal rigidity.

We first group the columns according to the coordinates. % to $d-1$ separate groups $C_j$. 
In other words, we have $d-1$ groups $C_j$, 
where all columns for the first coordinate belong to $C_1$, 
all columns for the second coordinate belong to $C_2$, etc. 
%Each column group $C_j$ has $|V|$ columns, and for a give row, 
%they are either all zero, or contain $s$ non-zero entries.
 This can be done by applying a Laplace expansion to rewrite the determinant of the rigidity matrix $\hat{M}$ 
as a sum of products of determinants (brackets) representing each of the coordinates taken separately:
\[
\det(\hat{M}) = \sum_{\sigma} \left( \pm \prod_j \det \hat{M}[R_j^\sigma, C_j] \right)
\]
where the sum is taken over all partitions $\sigma$ of the rows into $d-1$ subsets
$R^\sigma_1, R^\sigma_2,$ $\ldots,$ $R^\sigma_j,$ $\ldots,$ $R^\sigma_{d-1}$, each of size $|V|$.
Observe that for each $\hat{M}[R^\sigma_{j}, C_{j}]$, 
\[
\det(\hat{M}[R^\sigma_{j}, C_{j}]) = \pm (b^{\sigma 1}      \ldots   b^{\sigma n} ) \det(M'[R^\sigma_{j}, C_{j}])
\]
for some coefficients $ (b^{\sigma 1}    \ldots   b^{\sigma n} )$, % related to $\sigma$ and $j$, 
and each row of $\det(M'[R^\sigma_{j}, C_{j}])$ is either all zero, or of pattern \eqref{eq:map_row_pattern}.
%\[
%[0,\ldots, 0, a_1, 0, \ldots, a_2, 0, \ldots \ldots, 0, a_{s-1}, 0, \ldots, 1-\sum_i a_i, 0, \ldots, 0]
%\]
%There exists one term of the sum whose row partition $r^*$ corresponds to the map decomposition.
By Lemma \ref{lem:map_decomposition}, the expanded mutli-hypergraph $\hat{H}$ can be decomposed into $(d-1)$ edge-disjoint maps. 
Each such decomposition has some corresponding row partitions $\sigma$, 
where each column group $C_j$ corresponds to a map $N_j$, 
and $R^\sigma_j$ contains rows corresponding to the edges in that map.
%By observing the row pattern \eqref{eq:row_pattern},
%we can see that such an $M[R^\sigma_j,C_j]$ is either all zero, 
%or contains rows with patterns in Lemma \ref{lem:map_det}.
%
Observe that $\hat{M}[R^\sigma_j, C_{j}]$ contains an all-zero row $r$, %$r^\sigma_j(e)$ for edge $e$, 
if and only if
the %corresponding row of $e$ picked in $R^\sigma_j$ 
row $r$ has the $j$th coordinate entry being zero in $\hat{M}$. 
Recall for each hyperedge $x_k$, 
 we are free to pick any $d-s$ rows to include in $\hat{M}$ 
 from the $d-1 \choose s $ rows in the symmetric rigidity matrix $M$. 
 We claim that
 \begin{claim} \label{claim:no_zero_row}
Given a map decomposition, 
we can always pick the rows of the rigidity matrix $\hat{M}$, 
such that there is a corresponding row partition $\sigma^*$,
where none of the minors $\hat{M}[R^{\sigma^*}_j, C_{j}]$ contains an all-zero row.
\end{claim}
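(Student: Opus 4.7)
The plan is to exploit the zero-pattern of the rigidity matrix: a row of $\hat{M}$ indexed by a copy of hyperedge $x_k$ with chosen coordinate set $C(t) \subseteq \{1, \ldots, d-1\}$ has non-zero entries only in column blocks $C_j$ with $j \in C(t)$, because $\delta\, V^k_{i,j}(t) = 0$ for $j \notin C(t)$. Thus, to make every minor $\hat{M}[R^{\sigma^*}_j, C_j]$ free of all-zero rows, it suffices to produce jointly (i) a decomposition of $\hat{H}$ into $d-1$ edge-disjoint maps $N_1, \ldots, N_{d-1}$, and (ii) for each original hyperedge $x_k$, an assignment of $d-s$ distinct size-$s$ subsets $C^k_1, \ldots, C^k_{d-s}$ of $\{1,\ldots,d-1\}$ to its $d-s$ copies, such that whenever the $\ell$-th copy is placed in map $N_j$ by (i), we have $j \in C^k_\ell$.

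First, I would refine a map-decomposition of $\hat{H}$ (one exists by Lemma~\ref{lem:map_decomposition}) so that, for every hyperedge $x_k$ of the original $H$, its $d-s$ copies lie in $d-s$ \emph{pairwise distinct} maps. When $s = 1$ this is automatic (all copies of $x_k$ share the unique tail vertex and cannot coexist in a single map), and when $d-s = 1$ there is only one copy, so the issue is vacuous. In the remaining cases I would argue by a matroid-exchange on map-decompositions: if two copies of $x_k$ co-locate in a single map $N_j$, one can follow the tail-orientation within $N_j$ to find an alternating path terminating in a map $N_{j'}$ not already containing a copy of $x_k$, and swap one copy across, strictly decreasing the global number of co-located pairs. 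Iterating terminates in a decomposition with the desired separation.

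Second, with copies of each $x_k$ now in pairwise distinct maps $N_{j^k_1}, \ldots, N_{j^k_{d-s}}$, let $\bar{J}^k = \{1, \ldots, d-1\} \setminus \{j^k_1, \ldots, j^k_{d-s}\}$ (of size $s-1$) and set $C^k_\ell = \{j^k_\ell\} \cup \bar{J}^k$. These are $d-s$ distinct size-$s$ subsets, and each $C^k_\ell$ contains $j^k_\ell$ by construction, so the corresponding row has a non-zero entry in block $C_{j^k_\ell}$. By the remark just before the claim that any $d-s$ of the $\binom{d-1}{s}$ rows per hyperedge in $M$ are generically independent and span the rest, this constitutes a valid choice of $\hat{M}$, and the refined decomposition supplies the required partition $\sigma^*$.

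The main obstacle is the first step, namely producing a map-decomposition that simultaneously scatters the copies of every hyperedge. I expect this to follow from the matroid-union description of $(d-1,0)$-tight hypergraphs as $d-1$ edge-disjoint bases of the hypergraphic map-graph matroid, where the standard exchange axiom supplies the augmenting step; the delicate part is verifying that each swap strictly reduces the count of co-located pairs rather than merely shuffling co-location to a different hyperedge. By contrast, the coordinate assignment in the second step is an explicit formula, trivial to verify, and degenerates correctly in the boundary cases $s = 1$ (where $\bar{J}^k$ is empty) and $d-s = 1$ (where $C^k_1$ is forced to be the unique size-$(d-1)$ subset).
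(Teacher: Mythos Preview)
Your approach differs from the paper's in a key respect: the paper does \emph{not} refine the map decomposition. It works with the given decomposition as-is, allowing several copies of the same hyperedge $x_k$ to sit in a single map $N_j$. The argument is a direct greedy count: for each coordinate $j$, exactly $\binom{d-2}{s-1}$ of the $\binom{d-1}{s}$ candidate rows for $x_k$ have a nonzero entry in block $C_j$ (namely those $C(t)$ with $j \in C(t)$), and one checks that $\binom{d-2}{s-1} \ge d-s$ for all $2 \le s \le d-1$. So if $N_j$ holds $k_j$ of the $d-s$ copies of $x_k$, then after the remaining maps have consumed at most $(d-s)-k_j$ rows in total, at least $\binom{d-2}{s-1} - \big((d-s)-k_j\big) \ge k_j$ rows with nonzero $j$-th block are still available; a greedy assignment of rows to maps therefore succeeds for every hyperedge independently.

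This buys the paper two things your route does not. First, it proves the claim exactly as stated, for \emph{any} given decomposition, whereas your refinement only yields the existential version (enough for the main theorem, but weaker than the claim as written). Second, and more to the point, it bypasses precisely the step you flag as the obstacle: the matroid-exchange argument that co-located copies can always be separated. As you anticipate, a single swap $e \leftrightarrow f$ between $N_j$ and $N_{j'}$ may create a fresh co-location of $f$ in $N_j$, and you have not shown how to control this globally; the paper's counting inequality is shorter and sidesteps the issue entirely. Your Step~2 is correct, and once copies are in distinct maps your explicit choice $C^k_\ell = \{j^k_\ell\} \cup \bar{J}^k$ is arguably cleaner than what the paper leaves implicit---but the paper's argument shows that Step~1 was never needed.
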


%%We provide the proof of Claim \ref{claim:no_zero_row} in Appendix \ref{app:claim}.
For any map $N_j$ in the given map decomposition, for any hyperedge $x_k$,
there are $d-2 \choose s-1$ among the $d-1 \choose s $ rows in $M$ with the $j$th coordinate being non-zero.
Also, it is not hard to show that for all $2 \le s \le d-1$, ${d-2 \choose s-1} \ge d-s$. 
So for any $N_j$ containing $k_j$ copies of a particular hyperedge, 
since all the other maps can pick at most $(d-s) - k_j$ rows from its $d-2 \choose s-1$ choices,
it still has ${d-2 \choose s-1} - ((d-s) - k_j) \ge k_j$ choices. 
%Recall that we are free to pick any $d-s$ rows in the rigidity matrix $M$ for each hyperedge. 
Therefore, given a map decomposition, %for each map $N_j$ which contains $k$ duplications of a hyperedge $e$,
we can always pick the rows in the rigidity matrix $\hat{M}$, %for $e$, %such that at least $k$ of these rows have non-zeros at coordinate $j$.
%This is always possible, since no two maps will contain the same duplication of $e$
such that there is a partition of each hyperedge's rows,  %these $d-s$ rows %for $e$ 
where each map $N_j$ get its required rows with non-zeros at coordinate $j$. 
This concludes the proof of the claim.

So by Lemma \ref{lem:map_det}, the determinate of each such minor  $\hat{M}[R^{\sigma^*}_j, C_{j}]$ is generically non-zero. 
We conclude that 
\[
\det(\hat{M}) = \sum_{\sigma} \left( \pm \prod_j  \left( (b^{\sigma 1}   \ldots b^{\sigma n} ) \det \left( M'[R_j^\sigma, C_j]\right) \right) \right)
\]
Observe that each term of the sum has a unique multi-linear coefficient $(b^{\sigma 1}  \ldots b^{\sigma n} )$
that generically do not cancel with any of the others
since $\det(M'[R_j^\sigma,C_j])$ are independent of the $b$'s.
%TODO I am not sure about this!
This implies that %the generic rank of $\hat{M}$ is $(d-1)m$, 
$\hat{M}$ is generically full rank,
thus completes the proof.
Moreover, substituting the values of  $\det(M'[R_j^\sigma,C_j])$ from Lemma \ref{lem:map_det} gives the pure condition for genericity.	
%condition of genericity...
%\end{proof}
\qquad \endproof

\begin{example}
Consider the pinned subspace-incidence framework  in Example \ref{example:k4} with $d=4,s=2$.
The expanded mutli-hypergraph (replacing each hyperedge with $2$ copies) satisfies $(3,0)$-tightness condition,
and the framework is minimally rigid.
The rigidity matrix $\hat{M}$ has the following form:
\setlength{\extrarowheight}{2pt}
\[
%\hspace{-55pt}
\left[
\scalemath{0.6}{
\begin{array}{ccc|ccc|ccc|ccc}
 b^1_{2} a^1_1 &  -b^1_{1} a^1_1 &  0  &  - b^1_{2} a^1_2 &   b^1_{1} a^1_2 &   0  &  0 &  0 &  0 &  0 &  0 &  0 \\
  b^1_{3} a^1_1 &   0  &   -b^1_{1} a^1_1 &   -b^1_{3} a^1_2 &   0  &   b^1_{1} a^1_2 &  0 &  0 &  0 &  0 &  0 &  0 \\

  0 &  0 &  0 &   b^2_{2} a^2_1 &    -b^2_{1} a^2_1  &   0  &  -b^2_{2} a^2_2&   b^2_{1} a^2_2 &   0  &  0 &  0 &  0 \\
  0 &  0 &  0 &   b^2_{3} a^2_1 &   0  &   -b^2_{1} a^2_1 &   -b^2_{3} a^2_2 &   0  &   b^2_{1} a^2_2 &  0 &  0 &  0 \\

  0 &  0 &  0 &  0 &  0 &  0 &   b^3_{2} a^3_1 &   -b^3_{1} a^3_1 &   0  &    -b^3_{2} a^3_2 &   b^3_{1} a^3_2 &   0  \\
  0 &  0 &  0 &  0 &  0 &  0 &   b^3_{3} a^3_1 &   0  &    -b^3_{1} a^3_1 &   -b^3_{3} a^3_2 &   0  &   b^3_{1} a^3_2 \\

   b^4_{2} a^4_1 &    -b^4_{1} a^4_1 &   0  &   0 &  0 &  0 &  0 &  0 &  0 &   -b^4_{2} a^4_2 &  b^4_{1} a^4_2 &   0  \\
  b^4_{3} a^4_1 &   0  &   -b^4_{1} a^4_1 &   0 &  0 &  0 &  0 &  0 &  0 &    -b^4_{3} a^4_2 &   0  &   b^4_{1} a^4_2 \\

   b^5_{2} a^5_1 &  -b^5_{1} a^5_1 &   0  &   0 &  0 &  0 &   -b^5_{1} a^5_2 &   b^2_{2} a^5_2 &   0  &  0 &   0 &  0\\
  b^5_{3} a^5_1 &  0 &   -b^5_{1} a^5_1 &   0&  0&  0&    -b^5_{1} a^5_2 &  0 &   b^2_{3} a^5_2 &  0&  0&  0\\

  0 &  0 &  0 &   b^6_{2} a^6_1 &   -b^6_{1} a^6_1 &   0  &   0 &  0 &  0 &   -b^6_{2} a^6_2 &   b^6_{1} a^6_2 &   0 \\
  0 &  0 &  0  &  b^6_{3} a^6_1 &   0  &  -b^6_{1} a^6_1 &   0 &  0 &  0&    -b^6_{3} a^6_2 &   0 &  b^6_{1} a^6_2 \\
\end{array}}
 \right]
\] 
%\[
%%\hspace{-55pt}
%\left[
%\scalemath{0.6}{
%\begin{array}{cccccccccccc}
% b_{1,1} a_1 &  b_{1,2} a_1 &  0  &   b_{1,1} (1-a_1) &   b_{1,2} (1-a_1) &   0  &  0 &  0 &  0 &  0 &  0 &  0 \\
%  b_{2,1} a_2 &   0  &   b_{2,2} a_2 &   b_{2,1} (1-a_2) &   0  &   b_{2,2} (1-a_2) &  0 &  0 &  0 &  0 &  0 &  0 \\
%
%  0 &  0 &  0 &   b_{3,1} a_3 &    b_{3,2} a_3  &   0  &  b_{3,1} (1-a_3)&   b_{3,2} (1-a_3) &   0  &  0 &  0 &  0 \\
%  0 &  0 &  0 &   b_{4,1} a_4 &   0  &   b_{4,2} a_4 &   b_{4,1} (1-a_4) &   0  &   b_{4,2} (1-a_4) &  0 &  0 &  0 \\
%
%  0 &  0 &  0 &  0 &  0 &  0 &   b_{5,1} a_5 &    b_{5,2} a_5 &   0  &    b_{5,1} (1-a_5) &   b_{5,2} (1-a_5) &   0  \\
%  0 &  0 &  0 &  0 &  0 &  0 &   b_{6,1} a_6 &   0  &    b_{6,2} a_6 &   b_{6,1} (1-a_6) &   0  &   b_{6,2} (1-a_6) \\
%
%   b_{7,1} a_{7} &    b_{7,2} a_{7} &   0  &   0 &  0 &  0 &  0 &  0 &  0 &   b_{7,1} (1-a_{7}) &   b_{7,2} (1-a_{7}) &   0  \\
%  b_{8,1} a_8 &   0  &   b_{8,2} a_8 &   0 &  0 &  0 &  0 &  0 &  0 &    b_{8,1} (1-a_8) &   0  &   b_{8,2} (1-a_8) \\
%
%   b_{9,1} a_{9} &   b_{9,2} a_{9} &   0  &   0 &  0 &  0 &   b_{9,1} (1-a_{9}) &   b_{9,2} (1-a_{9}) &   0  &  0 &   0 &  0\\
%  b_{10,1} a_{10} &  0 &   b_{10,2} a_{10} &   0&  0&  0&    b_{10,1} (1-a_{10}) &  0 &   b_{10,2} (1-a_{10}) &  0&  0&  0\\
%
%  0 &  0 &  0 &   b_{11,1} a_{11} &   b_{11,2} a_{11} &   0  &   0 &  0 &  0 &   b_{11,1} (1-a_{11}) &   b_{11,2} (1-a_{11}) &   0 \\
%  0 &  0 &  0  &  b_{12,1} a_{12} &   0  &   b_{12,2} a_{12} &   0 &  0 &  0&    b_{12,1} (1-a_{12}) &   0 &   b_{12,2} (1-a_{12}) \\
%\end{array}}
% \right]
%\] 

After grouping the coordinates, it becomes
\[
%\hspace{-55pt}
\left[
\scalemath{0.6}{
\begin{array}{cccc|cccc|cccc}
 b^1_{2} a^1_1 &  - b^1_{2} a^1_2 &0&0
 & \color{red} \boldsymbol{-b^1_{1} a^1_1}   & \color{red} \boldsymbol{b^1_{1} a^1_2}& \color{red} \boldsymbol{0}& \color{red} \boldsymbol{0}
 &  0   &   0&0&0
 \\
  b^1_{3} a^1_1 &   -b^1_{3} a^1_2 & 0 &   0  
  &0 &0 &0 &0
  & \color{red} \boldsymbol{-b^1_{1} a^1_1  }& \color{red} \boldsymbol{b^1_{1} a^1_2 }& \color{red} \boldsymbol{0 }& \color{red} \boldsymbol{0  }\\

  0 &   b^2_{2} a^2_1  &  -b^2_{2} a^2_2&0
  &\color{red} \boldsymbol{0    }&\color{red} \boldsymbol{-b^2_{1} a^2_1 }&\color{red} \boldsymbol{b^2_{1} a^2_2 }&\color{red} \boldsymbol{0}
  &  0   &   0 &   0  &  0 
  \\
  \color{red} \boldsymbol{0 }&\color{red} \boldsymbol{b^2_{3} a^2_1}&\color{red} \boldsymbol{-b^2_{3} a^2_2 }& \color{red} \boldsymbol{0}
  &  0 & 0 &0 &0
  &  0 &   -b^2_{1} a^2_1 &   b^2_{1} a^2_2 &0
  \\

  0 &0 &   b^3_{2} a^3_1  &    -b^3_{2} a^3_2
  &\color{red} \boldsymbol{0 }&\color{red} \boldsymbol{0  }&\color{red} \boldsymbol{-b^3_{1} a^3_1  }&\color{red} \boldsymbol{b^3_{1} a^3_2}
  &  0 &0 &   0 &   0
  \\
  0 &  0  &   b^3_{3} a^3_1 &   -b^3_{3} a^3_2
  &  0 &  0    &   0 & 0
  &\color{red} \boldsymbol{0 }&\color{red} \boldsymbol{0  }&\color{red} \boldsymbol{-b^3_{1} a^3_1}&\color{red} \boldsymbol{b^3_{1} a^3_2}
\\

   \color{red} \boldsymbol{b^4_{2} a^4_1  }&\color{red} \boldsymbol{0 }&\color{red} \boldsymbol{0 }&\color{red} \boldsymbol{-b^4_{2} a^4_2}
   &    -b^4_{1} a^4_1     &  0 &  0 &  b^4_{1} a^4_2
   &   0   &  0 &  0 &   0  
   \\
  b^4_{3} a^4_1 &   0 &  0&    -b^4_{3} a^4_2
  &   0  &   0 &  0 &0 
  &\color{red} \boldsymbol{-b^4_{1} a^4_1 }&\color{red} \boldsymbol{0 }&\color{red} \boldsymbol{0 }&\color{red} \boldsymbol{b^4_{1} a^4_2}
\\

   b^5_{2} a^5_1 & 0  &   -b^5_{2} a^5_2 & 0
   &\color{red} \boldsymbol{-b^5_{1} a^5_1 }&\color{red} \boldsymbol{0   }&\color{red} \boldsymbol{b^5_{1} a^5_2 }&\color{red} \boldsymbol{0}
   &   0   & 0 & 0 & 0
   \\
  \color{red} \boldsymbol{b^5_{3} a^5_1 }&\color{red} \boldsymbol{0 }&\color{red} \boldsymbol{-b^5_{3} a^5_2 }&\color{red} \boldsymbol{0}
  &  0 &   0 &  0 &  0
  &  -b^5_{1} a^5_1 &   0 &   b^5_{1} a^5_2 &  0
\\

 \color{red} \boldsymbol{0  }&\color{red} \boldsymbol{b^6_{2} a^6_1 }&\color{red} \boldsymbol{0  }&\color{red} \boldsymbol{-b^6_{2} a^6_2}
  &  0    &   -b^6_{1} a^6_1 &   0  &   b^6_{1} a^6_2
  &  0 &   0  &   0  &   0
 \\
  0   &  b^6_{3} a^6_1  &   0 &    -b^6_{3} a^6_2
  &  0  &   0  &   0 &   0
  &\color{red} \boldsymbol{0   }&\color{red} \boldsymbol{-b^6_{1} a^6_1 }&\color{red} \boldsymbol{0 }&\color{red} \boldsymbol{b^6_{1} a^6_2}
  
\end{array}}
 \right]
\] 
where the boldfaced rows inside each column group corresponding to a map decomposition of the expanded mutli-hypergraph. 
\end{example}

%
%The Theorem gives a pure condition that characterizes the badly behaved cases 
%elaborated in Appendix \ref{app:pure_condition}. 
%
%TODO pure condition ... ehh
Theorem \ref{thm:rigidity_condition} gives a pure condition that characterizes the badly behaved cases
(i.e.\ the conditions of non-genericity that breaks the combinatorial
characterization of the infinitesimal rigidity). The pure condition is a function of the $a$'s and $b$'s which can be calculated from the particular realization 
(framework) using Lemma \ref{lem:map_det} and the main theorem. 
Whether it is possible to efficiently test for genericity
from the problem's input (the hypergraph and $x_k$'s) is an open problem.

One particular situation avoided by the pure condition is that there can not be more than one pin on a subspace spanned by the same set dictionary vectors. 
%cannot be more than $s-1$ hyperedges containing the same set of vertices, 
%namely, more than $s-1$ pins on the same subspace spanned by the dictionary vectors. 
This is important, otherwise %, $s$ pins completely determine an $s$-subspace, whereby the vertices of the corresponding hyperedge have their degrees of freedom restricted and simple 
simple counterexamples to the characterization of the main theorem can be constructed.
%n Otherwise, there can be counterexamples XXXX .... 

\begin{example}
Consider the framework in Figure \ref{fig:s_pins_conterexample} with $d=3,s=2$.
There are $2$ pins on each subspace. 
The expanded mutli-hypergraph of the framework is $(2,0)$-tight. 
However, the framework is obviously not rigid. 
\begin{figure}[hbtp]
\begin{center}
\includegraphics[width=.25\linewidth]{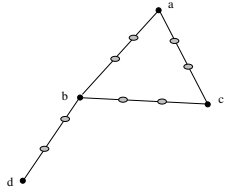}
\end{center}
\caption{An pinned subspace-incidence framework  of $8$ pins and $4$ vertices, with $d=3,s=2$, that violates the pure condition.}
\label{fig:s_pins_conterexample}
\end{figure}
\end{example}

Theorem \ref{thm:rigidity_condition} requires the following genericities: 
\begin{itemize}
 \item The pure condition, which is a function of a given framework.
 \item Generic infinitesimal rigidity, which is the generic rank of the matrix. % (i.e.\ the rank of the rigidity matrix of a realization be at least as large as the rank of all other realizations).
\end{itemize}

The relationship between the two notions of genericities is an open question. Whether  one implies the other is an area of future development. 
However, each of the above conditions applies to an open and dense set. 
Therefore the notion of genericity for the entire theorem that satisfies all of the above conditions
is also open and dense.%, subset of $R^{6m}$. 
%In other words the theorem applies compliment to a closed nowhere  set.

\medskip
%\subsubsection{Proof of Corollaries}

The combinatorial characterization in Theorem \ref{thm:rigidity_condition}
leads to the proof of Corollary \ref{cor:dictionary_size_bound_generic}, 
which gives a lower bound of dictionary size for generic data points
in the general Dictionary Learning problem. 

{\em Proof of Corollary \ref{cor:dictionary_size_bound_generic} (Lower bound of dictionary size for generic data points)}. 
%\begin{proof}[Proof of Corollary \ref{cor:dictionary_size_bound_generic} (Lower bound of dictionary size for generic data points)]
%One direction holds because for $(d-s)m > (d-1)n$, the system is generically overconstrained, 
%implying that there is generically no solution. 
%
We first prove one direction, that there is generically no dictionary of size $|D|=n$ if $(d-s)m > (d-1)n$.
For any hypothetical $s$-subspace arrangement $\sxd$, the expanded mutli-hypergraph 
$\hat{H}(\sxd)$ - with the given bound for $|D|$ - 
cannot be $(d-1,0)$-sparse.
Hence generically, under the pure conditions of Theorem \ref{thm:rigidity_condition},
 the rigidity matrix - of the $s$-subspace framework $\hsxd$ - with
indeterminates representing the coordinate positions of the points in $D$ -
 has dependent rows. 
 In which case, the original algebraic system $\algesystem$%\eqref{eq:system}
(whose Jacobian is the rigidity matrix)
will not have a  (complex or real) solution for $D$, with $X$ plugged in.
%TODO complex or real?

%This implies that the system is generically rigid but not minimally rigid. 
The converse is implied from our theorem since %at $(d-s)n = (d-1)m$ 
we are guaranteed both generic independence
(the existence of a solution) and generic rigidity (at most finitely many solutions). \qquad \endproof
%\end{proof}

By characterizing the term ``generically'' in Corollary \ref{cor:dictionary_size_bound_generic}, 
we prove Corollary \ref{cor:dictionary_size_bound}
which gives a lower bound of dictionary size for data points picked uniformly at random from the sphere $S^{d-1}$.

%\begin{proof}[Proof of Corollary \ref{cor:dictionary_size_bound} (Lower bound of dictionary size for highly general data points)]
{\em Proof of Corollary \ref{cor:dictionary_size_bound} (Lower bound of dictionary size for highly general data points)}. 
To quantify the term ``generically'' in Corollary \ref{cor:dictionary_size_bound_generic},
we note that the pure-conditions fail only on a measure-zero subset of the
space of frameworks $S_{X,D}$. Since the number of possible
hypergraphs representing the $s$-subspace arrangements is finite for
a given set of pins, it follows that except for a measure-zero subset of
the space of pin-sets $X$, there is no (real or complex) solution to the
algebraic system $\algesystem=0$ when $(d-s)m > (d-1)n$.
 % \eqref{eq:system}.
Thus when $X$ is picked uniformly at random from the sphere $S^{d-1}$, if $|D|$ is
less than $\big((d-1)/(d-s)\big)|X|$,  with probability 1, there is no solution. \qquad \endproof
%$S_{X,D}$.
%\end{proof}

\section{Dictionary Learning Algorithm}
\label{sec:algorithm}

In this section, we present the algorithm in Corollary \ref{corr:straight_forward_algo}, 
which constructs  a dictionary of size $n = \left(\dfrac{d-s}{d-1}\right)m$, given $m$ data points picked uniformly at random from the sphere $S^{d-1}$.
The algorithm has two major parts: (1) constructing the underlying hypergraph $H(S_{X,D})$, 
and (2) constructing the $s$-subspace arrangement $\sxd$ and the dictionary $D$. 

\smallskip\noindent \textbf{(1) Algorithm for
constructing the underlying hypergraph $H(S_{X,D})$ for a hypothetical
$s$-subspace arrangement $S_{X,D}$}:

The algorithm  works in three stages to construct a expanded mutli-hypergraph $\hat{H}(S_{X,D})$:
\begin{enumerate}
\item 
We start by constructing a minimal minimally rigid hypergraph $H_0 = (V_0,E_0)$, using the \emph{pebble game algorithm} introduced below.
Here $|V_0| = k(d-s)$, $|E_0| = k(d-1)$, where $k$ is the smallest positive integer such that
%$ {k(d-s) \choose s} (s-1) \ge k(d-1) $,
$ {k(d-s) \choose s}  \ge k(d-1) $,
so it is possible to construct $E_0$ such that
no more than one hyperedge in $E_0$ containing the same set of vertices in $V_0$.
%no more than $s-1$ hyperedges in $E_0$ contain the same set of vertices, 
%i.e. $ {k(d-s) \choose s} (s-1) \ge k(d-1) $.
%We assume that $|E_0| << m$.
The values $|V_0|$ and $|E_0|$ are constants when we think of $d$ and $s$ as constants.

\item We use the pebble game algorithm to append a set $V_1$ of $d-s$ vertices and a set $E_1$ of $d-1$ hyperedges 
to $H_0$, such that each hyperedge in $E_1$ contains at least one vertex from $V_1$, 
and the obtained graph $H_1$ is still minimally rigid. 
The subgraph $B_1$ induced by $E_1$ has vertex set $V_{B_1} = V_1  \bigcup V_B$, 
where $V_B \subset V_0$. 
We call the vertex set $V_B$ the \emph{base vertices} of the construction.

\item Each of the following construction step $i$ appends
a set $V_i$ of $d-s$ vertices and a set $E_i$ of $d-1$ hyperedges
such that the subgraph $B_i$ induced by $E_i$ has vertex set $V_i \bigcup V_B$,
and $B_i$ is isomorphic to $B_1$. 
In other words, at each step, 
we directly append a basic structure the same as $(V_1,E_1)$ to the base vertices $V_B$. 
%This can be done without using the pebble game algorithm once we have constructed $V_1$ and $E_1$.
It is not hard to verify that the obtained graph is still minimally rigid.
% * * * * * * * * * * * * * * * * * * * * * * * * * * * * * * * * * * * * * * *
% * * * * PROOF that the result graph is still minimally rigid * * * * * * * * 
%The graph obtained is still minimally rigid:
%1. $H_1$ is minimally rigid => 
%expanded $H_1$  has a $(d-1)$-map-decomposition => 
%(1)each vertex in $V_1$ is assigned as tail of $d-1$ expanded hyperedges. 
%(2)Only the $d-1$ edges in $E_1$, i.e. $(d-1)(d-s)$ expanded edges are adjacent to $V_1$. 
%(1)(2) => (3) In the map decomposition, all expanded edges in $E_1$ has a vertex from $V_1$ as tail. 
%For any $i$, we can always take the same orientation as (3) for $E_i$ and $V_i$. 
%Therefore the obtained graph will have a map decomposition. => Minimally rigid. 
% * * * * * * * * * * * * * * * * * * * * * * * * * * * * * * * * * * * * * * *

%\item Each construction step $i$ appends a set $V_i$ of $d-s$ vertices and a set $E_i$ of $d-1$ hyperedges 
%to the constructed graph, where each hyperedge in $E_i$ contains at least one vertex from $V_i$. 
%The subgraph $B_i$ spanned by $E_i$ contains 
%
%a basic structure $B$ to the constructed graph, 
%where $B$ contains $d-s$ vertices, and $d-1$ edges
%to the constructed hypergraph.
%This $B$ can be obtained using the pebble game algorithm. 
\end{enumerate}
The \emph{pebble game algorithm} by \cite{streinu2009sparse} works on a fixed finite set $V$ of vertices 
and constructs a $(k,l)$-sparse hypergraph. 
Conversely, any $(k,l)$-sparse hypergraph on vertex set $V$ can be constructed by this algorithm.
The algorithm initializes by putting $k$ pebbles on each vertex in $V$. 
There are two types of moves: 
\begin{itemize}
\item \emph{Add-edge}: adds a hyperedge $e$ (vertices in $e$ must contain at least $l+1$ pebbles), removes a pebble from a vertex $v$ in $e$, and assign $v$ as the tail of $e$; 
\item \emph{Pebble-shift}: for a hyperedge $e$ with tail $v_2$, and a vertex $v_1 \in e$ which containing at least one pebble, 
%let $v_2$ be the tail of $e$, 
moves one pebble from $v_1$ to $v_2$, and change the tail of $e$ to $v_1$.
\end{itemize}
At the end of the algorithm, if there are exactly $l$ pebbles in the hypergraph, then the hypergraph is $(k,l)$-tight.

Our algorithm runs a slightly modified pebble game algorithm to find a $(d-1,0)$-tight expanded mutli-hypergraph.
We require that each add-edge move adding  $(d-s)$ copies of a hyperedge $e$, % $e$ is regarded as $(d-s)$ copies, 
so a total of $d-s$ pebbles are removed from vertices in $e$.
Additionally, the multiplicity of a hyperedge, not counting the expanded copies, cannot exceed 1. %$(s-1)$.
For constructing the basic structure of Stage 2, 
the algorithm initializes by putting $d-1$ pebbles on each vertex in $V_1$.
In addition, an add-edge move can only add a hyperedge that contains at least one vertex in $V_1$, %newly added vertex, 
and a pebble-shift move can only shift a pebble inside $V_1$.

%Given $X$, $d$ and $s$,
\sloppy
The pebble-game algorithm takes $O\left(s^2 |V_0| {|V_0| \choose s}\right)$ time in  Step 1, 
and $O\left(s^2 \left(|V_0| + (d-s)\right) {|V_0| + (d-s) \choose s }\right)$ time in Step 2. 
Since the entire underlying hypergraph $\hsxd$ has $m = |X|$ edges, % and $ n =m \big( \frac{d-s}{d-1} \big)$ vertices,
Step 3 will be iterated $O( m / (d-1))$ times, and each iteration takes constant time. 
Therefore the overall time complexity for constructing $\hsxd$ is 
\[O\left( s^2 \left(|V_0| + (d-s)\right) {|V_0| + (d-s) \choose s } + \left(m/(d-1)\right) \right)\] 
which is $O(m)$ when $d$ and $s$ are regarded as constants. 
%The time complexity for constructing the graph $H_0$ in Step 1 is $O(s^2 |V_0| {|V_0| \choose s})$.
%The time complexity for for constructing the base structure of $V_1$ and $E_1$ in Step 2 is $O(s^2 (d-s) {d-s + |V_0| \choose s})$.
%plus $O( m / (d-1))$ timed by a constant in the construction of Step 3,
%The time complexity for constructing $\hsxd$ is
%that is  ????? $O\left(s^2\big(|V_0|^2 + d^2\big) + m\right)$.

\fussy

\smallskip\noindent \textbf{(2) Algorithm for constructing the
$s$-subspace arrangement $S_{X,D}$ and the dictionary $D$}:
% naturally corresponds 
%to the construction of the dictionary $D$.

The construction of the $s$-subspace arrangement $S_{X,D}$ naturally follows from 
the construction of the underlying hypergraph $\hsxd$.
For the initial hypergraph $H_0$, 
we get a pinned subspace-incidence system $(H_0,X_0)(D_0)$ by arbitrarily choose $|X_0| = |E_0|$ pins from $X$.
%By the above argument,
%the part of system that corresponding to $H_0$ 
%is guaranteed to have a solution for the part of $D$ corresponding to vertices in $V_0$.
Similarly,  for Step 2 and each iteration of Step 3,
we form a pinned subspace-incidence system $(B_i,X_i)(D_i)$ by arbitrarily choosing $|X_i| = d-1$ pins from $X$.
%Notice that both $H_0$ and $B_i$ -  when positions of its neighbor vertices in the constructed hypergraph are fixed - 
%are minimally rigid. 

Given $X_0$, we know that the rigidity matrix
 -- of the $s$-subspace framework $H_0(S_{X_0,D_0})$ -- with indeterminates representing
the coordinate positions of the points in $D_0$ -- generically has full rank
(rows are maximally independent), under the pure conditions of Theorem \ref{thm:rigidity_condition};
in which case, the original algebraic subsystem $(H_0,X_0)(D_0)$ %\eqref{eq:system} 
 (whose Jacobian is the rigidity matrix), with $X_0$ plugged in,
is guaranteed to have a (possibly complex) solution  and only finitely
many solutions for $D_0$. Since the pure conditions fail only on a
measure-zero subset of the space of pin-sets $X_0$, where each pin is in
$S^{d-1}$, it follows that if the pins in $X_0$ are picked uniformly at random from $S^{d-1}$
we know such a solution exists for $D_0$ (and $S_{X_0,D_0}$) and can be found by
solving the algebraic system $H_0(S_{X_0,D_0})$.

Once we have solved $(H_0,X_0)(D_0)$, 
for each following construction step $i$, 
$B_i$ is also rigid since coordintate positions of the vertices in $V_B$ have been fixed
%(this actually follows from the generalization of our main result to non-uniform hypergraphs, 
%by thinking of each vertex in $V_B$ as being spanned by a single pin). 
So similarly, we know a solution exists for $D_i$ (and $S_{X_i,D_i}$) and can be found by
solving the algebraic system $B_i(S_{X_i,D_i})$, which is of constant size $O(d)$.
Although there can be more than one choice of solution for each step, 
since every construction step is based on base vertices $V_B$, 
the solution of one  step will not affect any other  steps, 
so generically any choice will result in a successful solution for the entire construction sequence, 
and we obtain $D$ by taking the union of all $D_i$'s.

When we regard $d$ and $s$ as constants, 
the time complexity for Stage (2)
is the constant time for solving the size $O(|V_0|)$ algebraic system $(H_0,X_0)(D_0)$, 
%plus the time for constructing the hypergraph structure of $B$, 
plus $O(m / (d-1))$ times the constant time for solving the size $O(d)$ system $(B_i,X_i)(D_i)$, % framework on $B$. 
that is $O(m)$ in total. 

Therefore the overall time complexity of the dictionary learning algorithm is $O(m)$. 

\section{Conclusion}
In this paper, we approached Dictionary Learning from a geometric point of view.

%disciplined way of looking at Dictionary Learning 
%introduced specific algorithms, with improvements for simplified instances, such as temporal coherence,
%which are commonly encountered in practice. 

We investigated Fitted Dictionary Learning theoretically using machinery from incidence geometry. 
Specifically, 
we formulate the Fitted Dictionary Learning problem as a nonlinear algebraic system $\algesystem$. 
We then follow Asimow and Roth \cite{asimow1978rigidity} to generically linearize $\algesystem$, 
and apply White and Whiteley \cite{white1987algebraic}  to 
obtain a combinatorial rigidity type theorem (our main result) which 
completely characterize the underlying hypergraph $\hsxd$ that are guaranteed to recover a finite number of dictionaries.
% using the map-graph characterization of  Streinu and Theran \cite{streinu2009sparse}.
% is obtained which completely characterizes
%the subspace arrangements 
%using a purely combinatorial property on the supports.
%
As corollaries of the main result, 
we gave lower bound for the size of dictionary when the data points are picked uniformly at random, 
and provided an algorithm for Dictionary Learning for such general data points. 
Additionally, we compare several closely related problems of independent interest,
 leading to different directions for future work. 
%we discussed several open questions leading to different directions for future work. 
%We established formal relationships with several related problem of independent interest, 
%which led to a new class of algorithms for Dictionary Learning.
%some minor results and questions, and outlined the next steps for this work.

\bibliographystyle{siam}
\bibliography{refs}

%\appendix

%\section{Proof of Lemma \ref{lem:generic}} \label{app:lem:generic}

%\section{Proof of Claim \ref{claim:no_zero_row}} \label{app:claim}

%\section{Pure Condition and Relation to Genericity} \label{app:pure_condition}

%\section{Proof of Lemma \ref{lem:map_det}} \label{app:lem_map_det}

\end{document}